\documentclass{article}

\usepackage[preprint]{neurips_2026}

\usepackage[utf8]{inputenc} %
\usepackage[T1]{fontenc}    %
\usepackage{hyperref}       %
\usepackage{url}            %
\usepackage{booktabs}       %
\usepackage{amsfonts}       %
\usepackage{nicefrac}       %
\usepackage{microtype}      %
\usepackage{xcolor}         %
\usepackage{comment}
\newcommand{\newc}{\newcommand}
\newc{\E}{\mathbb{E}}
\newc{\V}{\mbox{V}}
\newc{\N}{\mbox{N}}
\newc{\Bern}{\mbox{Bern}}
\newc{\Po}{\mbox{Po}}
\newc{\IG}{\mbox{IG}}
\newc{\Gam}{\mbox{Gam}}
\newc{\bdp}{\mathbf{p}}
\newc{\bdt}{\mathbf{t}}
\newc{\Normal}{\mathcal{N}}
\newc{\Expectation}{\mathbb{E}}
\newc{\odds}{\mbox{odds}}
\newc{\stderr}{\mbox{s.e.}}
\newc{\logit}{\mbox{logit}}
\newc{\sign}{\mbox{sign}}
\newc{\SD}{\mbox{SD}}
\newc{\qtilde}{\tilde{q}}
\newc{\deltahat}{\hat{\delta}}
\newc{\shat}{\hat{s}}
\newc{\bdq}{\mathbf{q}}
\newc{\bdmu}{\mbox{\boldmath $\mu$}}
\newc{\bdSigma}{\mbox{\boldmath $\Sigma$}}
\newc{\bdLambda}{\mbox{\boldmath $\Lambda$}}
\newc{\bdmuhat}{\mbox{\boldmath $\hat{\mu}$}}
\newc{\bdeta}{\mbox{\boldmath $\eta$}}
\newc{\bdtheta}{\mbox{\boldmath $\theta$}}
\newc{\bdbeta}{\mbox{\boldmath $\beta$}}
\newc{\bdgamma}{\mbox{\boldmath $\gamma$}}
\newc{\bdbetahat}{\mbox{\boldmath $\hat{\beta}$}}
\newc{\bdgammahat}{\mbox{\boldmath $\hat{\gamma}$}}
\newc{\bdthetahat}{\mbox{\boldmath $\hat{\theta}$}}
\newc{\bdvareps}{\mbox{\boldmath $\varepsilon$}}
\newc{\bdzero}{\mbox{\boldmath $0$}}
\newc{\bdone}{\mbox{\boldmath $1$}}
\newc{\bdnu}{\mbox{\boldmath $\nu$}}
\newc{\bdell}{\mbox{\boldmath $\ell$}}
\newc{\bdxi}{\mbox{\boldmath $\xi$}}
\newc{\bdomega}{\mbox{\boldmath $\omega$}}
\newc{\bdepsilon}{\mbox{\boldmath $\varepsilon$}}
\newc{\bdI}{\mathbf{I}}
\newc{\bdP}{\mbox{\boldmath $P$}}
\newc{\bdX}{\mbox{\boldmath $X$}}
\newc{\bdA}{\mbox{\boldmath $A$}}
\newc{\bdB}{\mbox{\boldmath $B$}}
\newc{\bdC}{\mbox{\boldmath $C$}}
\newc{\bdD}{\mbox{\boldmath $D$}}
\newc{\bdG}{\mbox{\boldmath $G$}}
\newc{\bdJ}{\mbox{\boldmath $J$}}
\newc{\bdK}{\mbox{\boldmath $K$}}
\newc{\bda}{\mbox{\boldmath $a$}}
\newc{\bdb}{\mbox{\boldmath $b$}}
\newc{\bdc}{\mbox{\boldmath $c$}}
\newc{\bde}{\mbox{\boldmath $e$}}
\newc{\bdu}{\mbox{\boldmath $u$}}
\newc{\bdv}{\mbox{\boldmath $v$}}
\newc{\bdx}{\mbox{\boldmath $x$}}
\newc{\bdy}{\mbox{\boldmath $y$}}
\newc{\bdz}{\mbox{\boldmath $z$}}
\newc{\bdr}{\mbox{\boldmath $r$}}
\newc{\bdQ}{\mbox{\boldmath $Q$}}
\newc{\bdR}{\mbox{\boldmath $R$}}
\newc{\bdY}{\mbox{\boldmath $Y$}}
\newc{\bdT}{\mbox{\boldmath $T$}}
\newc{\bdW}{\mbox{\boldmath $W$}}
\newc{\bdH}{\mbox{\boldmath $H$}}
\newc{\bdL}{\mbox{\boldmath $L$}}
\newc{\bdU}{\mbox{\boldmath $U$}}
\newc{\bdV}{\mbox{\boldmath $V$}}
\newc{\Multinom}{\mbox{Multinom}}
\newc{\tbar}{\bar{t}}
\newc{\Var}{\mbox{Var}}
\newc{\var}{\mbox{var}}
\newc{\diag}{\mbox{diag}}
\newc{\tr}{\mbox{tr}}
\newc{\phat}{\hat{p}}
\newc{\Xbar}{\bar{X}}
\newc{\xbar}{\bar{x}}
\newc{\Ybar}{\bar{Y}}
\newc{\ybar}{\bar{y}}
\newc{\dbar}{\bar{d}}
\newc{\yhat}{\hat{y}}
\newc{\bdyhat}{\mbox{\boldmath $\hat{y}$}}
\newc{\ytil}{\tilde{y}}
\newc{\ytilde}{\tilde{y}}
\newc{\ftil}{\tilde{f}}
\newc{\Ho}{\mbox{\bf H}_o}
\newc{\Ha}{\mbox{\bf H}_a}
\newc{\phatYX}{\phat_Y - \phat_X}
\newc{\SSG}{\mbox{SSG}}
\newc{\SSB}{\mbox{SSB}}
\newc{\SSE}{\mbox{SSE}}
\newc{\SST}{\mbox{SST}}
\newc{\SSR}{\mbox{SSR}}
\newc{\SSAB}{\mbox{SSAB}}
\newc{\MSG}{\mbox{MSG}}
\newc{\MSB}{\mbox{MSB}}
\newc{\MSE}{\mbox{MSE}}
\newc{\MST}{\mbox{MST}}
\newc{\MSAB}{\mbox{MSAB}}
\newc{\dfE}{\mbox{dfE}}
\newc{\dfG}{\mbox{dfG}}
\newc{\dfB}{\mbox{dfB}}
\newc{\dfT}{\mbox{dfT}}
\newc{\dfAB}{\mbox{dfAB}}
\newc{\muhat}{\hat{\mu}}
\newc{\betahat}{\hat{\beta}}
\newc{\alphahat}{\hat{\alpha}}
\newc{\etahat}{\hat{\eta}}
\newc{\phihat}{\hat{\phi}}
\newc{\sigmahat}{\hat{\sigma}}
\newc{\cl}{\centerline}
\newc{\redtitle}[1]{ {\color{red}\und{#1}:} }
\newc{\bluetitle}[1]{ {\color{blue}\und{#1}:} }
\newc{\magentatitle}[1]{ {\color{magenta}\und{#1}:} }
\newc{\R}{\mathbb{R}}
\newc{\trans}{^\mathsf{T}}
\newc{\xtx}{\bdX\trans\bdX}
\newc{\xxtxx}{\bdX(\xtx)^{-1}\bdX\trans}
\newc{\argmin}{\operatornamewithlimits{argmin}}
\newc{\argmax}{\operatornamewithlimits{argmax}}
\newc{\setS}{\mathcal{S}}
\newc{\toP}{\overset{p}{\to}}
\newc{\toD}{\overset{d}{\to}}
\newc{\eqD}{\overset{d}{=}}
\newc{\toAS}{\overset{a.s.}{\to}}
\newc{\simIID}{\overset{\text{i.i.d.}}{\sim}}
\newc{\simIND}{\overset{\text{ind.}}{\sim}}
\newc{\thetahat}{\hat{\theta}}
\newc{\bds}{\mathbf{s}}
\newc{\thetatilde}{\tilde{\theta}}
\newc{\Phat}{\hat{P}}
\newc{\Qhat}{\hat{Q}}
\newc{\tny}{\small}
\newc{\Col}{\textnormal{Col}}
\newc{\mutilde}{\tilde{\mu}}
\newc{\vecst}{\textnormal{vec}}
\newc{\indep}{\perp \!\!\! \perp}
\newc{\Cov}{\textnormal{Cov}}
\newc{\gtilde}{\tilde{g}}
\newc{\rank}{\textnormal{rank}}
\newc{\Cor}{\textnormal{Cor}}
\newc{\Lcol}{\mathcal{L}_{col}}
\newc{\Lrow}{\mathcal{L}_{row}}
\newc{\Ell}{\mathcal{L}}
\newc{\Norm}{\mathcal{N}}
\newc{\xtilde}{\tilde{x}}
\newc{\Xtilde}{\tilde{X}}
\newc{\Atilde}{\tilde{A}}
\newc{\Xhat}{\hat{X}}
\newc{\Vhat}{\hat{V}}
\newc{\Vperp}{V^\perp}
\newc{\Rhat}{\hat{R}}
\newc{\Shat}{\hat{S}}
\newc{\uhat}{\hat{u}}
\newc{\hhat}{\hat{h}}
\newc{\Yhat}{\hat{Y}}
\newc{\bpage}{\eject}
\newc{\bdg}{\mathbf{g}}
\newc{\bdZ}{\mathbf{Z}}
\newc{\Rn}{\mathbb{R}^n}
\newc{\Sigmahat}{\hat{\Sigma}}
\newc{\Ysubj}{Y_{(j)}}
\newc{\Rsubj}{R_{(j)}}
\newc{\A}{\mathcal{A}}
\newc{\B}{\mathcal{B}}
\newc{\Gammasubj}{\Gamma_{(j)}}
\newc{\Btilde}{\tilde{\mathcal{B}}}
\newc{\Xitilde}{\tilde{\Xi}}
\newc{\xitilde}{\tilde{\xi}}
\newc{\Gammatilde}{\tilde{\Gamma}}
\newc{\gammatilde}{\tilde{\gamma}}
\newc{\Lambdatilde}{\tilde{\Lambda}}
\newc{\Ls}{\mathcal{L}}
\newc{\Lambdasubj}{\Lambda_{(j)}}
\newc{\Msubj}{M_{(j)}}
\newc{\Lsubj}{L_{(j)}}
\newc{\Ytilde}{\tilde{Y}}
\newc{\Ytildehat}{\hat{\tilde{Y}}}
\newc{\xtildebar}{\tilde{\xbar}}
\newc{\F}{\mathcal{F}}
\newc{\M}{\mathcal{M}}
\newc{\G}{\mathcal{G}}
\newc{\Dtilde}{\tilde{D}}
\newc{\Utilde}{\tilde{U}}
\newc{\Vtilde}{\tilde{V}}
\newc{\betatilde}{\tilde{\beta}}
\newc{\fhat}{\hat{f}}
\newc{\Rtilde}{\tilde{R}}
\newc{\Qtilde}{\tilde{Q}}
\newc{\gammahat}{\hat{\gamma}}
\newc{\X}{\mathcal{X}}
\newc{\Y}{\mathcal{Y}}
\newc{\abar}{\bar{a}}
\newc{\phitilde}{\tilde{\phi}}
\newc{\ahat}{\hat{a}}
\newc{\bhat}{\hat{b}}
\newc{\bdf}{\mathbf{f}}
\newc{\bdh}{\mathbf{h}}
\newc{\Pois}{\text{Pois}}
\newc{\bdlambda}{\boldsymbol{\lambda}}
\newc{\Mult}{\text{Mult}}
\newc{\liminfc}{\underline{\lim}}
\newc{\limsupc}{\overline{\lim}}
\newc{\pbar}{\overline{p}}
\newc{\bdalpha}{\boldsymbol{\alpha}}
\newc{\psibar}{\overline{\psi}}
\newc{\bdxbar}{\overline{\mathbf{x}}}
\newc{\bdd}{\mathbf{d}}
\newc{\Wtilde}{\tilde{W}}
\newc{\bdZtilde}{\tilde{\mathbf{Z}}}
\newc{\ttilde}{\tilde{t}}
\newc{\Ztilde}{\tilde{Z}}
\newc{\pihat}{\hat{\pi}}
\newc{\betahatridge}{\betahat^{\lambda}_{\text{RR}}}
\newc{\epsilonhat}{\hat{\epsilon}}
\newc{\zhat}{\hat{z}}
\newc{\qhat}{\hat{q}}
\newc{\Bhat}{\hat{B}}

\newc{\hh}{\mathcal{H}}
\newc{\Unif}{\text{Unif}}
\newc{\psihat}{\hat{\psi}}
\newc{\Expo}{\text{Expo}}
\newc{\bdxtilde}{\tilde{\mathbf{x}}}
\newc{\toDPn}{\overset{P_n}{\rightsquigarrow}}
\newc{\toDQn}{\overset{Q_n}{\rightsquigarrow}}
\newc{\Fhat}{\hat{F}}
\newc{\bdO}{\mathbf{O}}
\newc{\bdn}{\mathbf{n}}
\newc{\Uhat}{\hat{U}}
\usepackage{xurl}
\usepackage{dblfloatfix}
\usepackage{float}
\usepackage{amsmath}
\usepackage{amssymb}
\usepackage{mathtools}
\usepackage{amsthm}
\usepackage{enumitem}
\usepackage{graphicx}
\theoremstyle{plain}
\newtheorem{theorem}{Theorem}[section]
\newtheorem{proposition}[theorem]{Proposition}
\newtheorem{lemma}[theorem]{Lemma}

\theoremstyle{definition}

\newtheorem{assumption}[theorem]{Assumption}
\theoremstyle{remark}

\usepackage{algorithm}
\usepackage{algorithmic}
\usepackage{placeins}

\title{Efficient Evaluation of LLM Performance with Statistical Guarantees}

\author{%
  Skyler Wu\\
  Department of Statistics\\
  Stanford University\\
  \texttt{skylerw@stanford.edu}\\
  \And
  Yash Nair\\
  Department of Statistics\\
  Stanford University\\
  \texttt{yashnair@stanford.edu}\\
  \AND
  Emmanuel J. Candès\\
  Department of Statistics\\
  Stanford University\\
  \texttt{candes@stanford.edu}\\
}

\begin{document}

\maketitle

\begin{abstract}
  Exhaustively evaluating many large language models (LLMs) on a large suite of benchmarks is expensive. We cast benchmarking as finite-population inference and, under a fixed query budget, seek tight confidence intervals (CIs) for model accuracy with valid frequentist coverage. We propose \textit{Factorized Active Querying} (FAQ), which (a) leverages historical information through a Bayesian factor model; (b) adaptively selects questions using a hybrid variance-reduction/active-learning sampling policy; and (c) maintains validity through \textit{Pro-Active Inference}---a finite-population extension of active inference \citep{zrnic2024active} that enables direct question selection while preserving coverage. With negligible overhead cost, FAQ delivers up to $5\times$ effective sample size gains over strong baselines on two benchmark suites, across varying historical-data missingness levels: this means that it matches the CI width of uniform sampling while using up to $5\times$ fewer queries. We release our source code and our curated datasets to support reproducible evaluation and future research.
\end{abstract}

\section{Introduction}

\textbf{Motivation}\;\;\; Large language models (LLMs) are rapidly moving from research demos to deployed systems in domains such as healthcare, law, and education \citep{chen2024survey, li2023large, dehghani2025large, kasneci2023chatgpt, artsi2025large, achiam2023gpt}. As adoption grows, so does the number of candidate models organizations must evaluate and monitor. This scale is visible in public ecosystems: HuggingFace hosts at least $1.86$M models, including $300$K+ tagged as text-generation---many of them fine-tuned variants of certain base models \citep{laufer2025anatomy}. Indeed, private institutions may maintain hundreds to thousands of internal variants, each exhibiting their own query-dependent strengths and weaknesses. As post-training becomes cheaper, the number of viable candidate models per deployment decision will only grow.

A common practice is to evaluate candidate models' performances on finite, curated benchmark suites. Such suites increasingly include questions that (a) require expensive domain-expert adjudication \citep{szymanski2025limitations, li2025preference, enguehard2025lemaj, croxford2025current}; (b) involve multi-turn interactions (e.g., helpfulness and safety) \citep{he2024multi, kwan2024mt}; and/or (c) demand long-form reasoning with verification \citep{bai2024longbench2, ding2025longreasonarena}. With tens of thousands of questions (e.g., $12$K on MMLU-Pro \citep{wang2024mmlu}), and hundreds to thousands of candidate models, exhaustively evaluating every model on every question is expensive in API costs, runtime, and expert annotation \citep{zhou2025speeding}.\footnote{Although we use ``questions" throughout for readability, the same finite-bank formulation applies to any suites of interactive tasks or trajectories that report task-level success indicators.} In practice, organizations may instead resort to ``vibes"-based judgments on ad-hoc subsets \citep{mazzoni2025}.

Meanwhile, many organizations have access to \textit{historical performance data}---from prior benchmark runs (e.g., the Open LLM Leaderboard \citep{open-llm-leaderboard-v2}) and/or production evaluation pipelines. As we show, this data can be leveraged to make model evaluation substantially faster and cheaper.

\textbf{Problem description}\;\;\; Consider a benchmark suite of $N_q$ evaluation units---questions, tasks, or trajectories---viewed as a fixed finite population. For a new model, let $z_j \in \{ 0, 1\}$ indicate whether it succeeds on unit $j$. We wish to estimate the \textit{model's accuracy} over the entire finite evaluation bank: $\theta := \frac{1}{N_q} \sum_{j=1}^{N_q} z_{j} \in [0, 1].$

We also assume access to historical data $H \in \{ 0, 1, \text{NA} \}^{N_{\text{old}} \times N_q}$ from $N_{\text{old}}$ prior models (e.g., previously benchmarked and/or observed in deployment), potentially with many missing entries (i.e., NA) due to resource and deployment constraints. Given a new model and budget $n_b$, the problem is: how should we adaptively select which $n_b$ questions to query so as to construct a confidence interval (CI) for $\theta$ that is as narrow as possible, while maintaining frequentist coverage?

Here, coverage (say, $95\%$) means that over repeated runs of the evaluation procedure (i.e., over sampling randomness), the reported $95\%$ CI contains the true finite-bank accuracy $\theta$ at least $95\%$ of the time. This guarantee is crucial for model selection and monitoring: miscalibrated uncertainty can lead to systematic overconfidence and poor deployment decisions, and the risk is amplified under adaptive evaluation where i.i.d.~assumptions and heuristic error bars may fail. 

\textbf{Our contributions}\;\;\; In this paper, we introduce \textit{Factorized Active Querying} (FAQ), an efficient LLM evaluation method that can produce CIs of comparable width, using $5 \times$ fewer queries and little to no additional computational costs, than competitive baselines (including uniform sampling), while maintaining frequentist coverage. FAQ has three components (see Figure \ref{fig:methods} for a full schematic):
\begin{enumerate}[noitemsep, itemsep=0pt, topsep=0pt, parsep=0pt, partopsep=0pt, label=(\alph*)]
    \item A \underline{Bayesian factor model} that extracts information from partially-observed historical data.
    \item A \underline{hybrid sampling policy} combining variance reduction and active learning.
    \item A coverage-preserving finite-population extension of active inference \citep{zrnic2024active}, which we call \underline{Pro-Active Inference (PAI)}. Unlike active inference, PAI directly selects the next question to query in a data-adaptive manner, rather than rigidly proceeding according to a fixed ordering and only making label/skip decisions.
\end{enumerate}
FAQ is designed for repeated evaluation settings, where organizations must compare many candidate or deployed model variants on massive fixed benchmark suites. One group of intended users would be frontier labs monitoring checkpoints (e.g, fine-tunes, distillations, and/or quantized variants). However, a much larger set of organizations whom FAQ is designed for are those performing evaluations who might not be the same ones conducting large-scale pre/post-training. Examples include domain-specific teams (e.g., legal, medical, or scientific organizations), benchmark creators and leaderboard services, and safety/governance teams evaluating many candidate models via paid APIs. In such settings, repeated model evaluation itself can be a meaningful bottleneck in cost (e.g., one evaluation of HELM can cost \$9,337 in API credits, see \citep{li2024active}), latency, and throughput, especially when tasks require multi-turn interactions, simulation/tool use, long-context reasoning, and/or expensive expert verification (e.g., doctors or lawyers).

We evaluate FAQ on two benchmark suites under varying historical-data missingness and quantify the additional gains attributable to PAI. We release our compiled and curated datasets (with $4.4$K+ models and $21.5$K+ questions); datasets and code are available at \url{https://github.com/skbwu/efficiently-evaluating-llms}.

\section{Related Work}

\textbf{Probabilistic factor modeling}\;\;\; FAQ is inspired by probabilistic factor modeling \citep{mnih2007probabilistic,collins2001generalization,chen2023statistical}, which assumes a (Bayesian) generative factor model for a partially-observed binary matrix---here, the matrix of model-question correctness outcomes---and learns low-dimensional latent factors from the data. Importantly, any uncertainty guarantees for these approaches are tied to correct specification of the factor model. In contrast, FAQ uses the Bayesian factor model only as a \textit{mental model} to guide adaptive query selection, while delivering valid frequentist coverage regardless of factor-model fidelity. If the data \textit{is} well-described by the factor model, FAQ automatically adapts and delivers tighter CIs.

\textbf{Active statistical inference}\;\;\; For model-free frequentist coverage, FAQ builds on active statistical inference \citep{zrnic2024active, gligoric2025can}, which uses machine learning predictions over unlabeled covariates to adaptively select labels under a fixed budget and construct CIs. Our benchmark setting, however, provides no explicit covariates, so we use the factor model to construct latent features to guide adaptive query selection. Moreover, unlike sequential active inference---which, in a pre-specified ``nature's order" stream, can only make label/skip decisions for the next item and enforces the budget constraints only in expectation---PAI/FAQ directly selects the next question from the full benchmark bank using all available information and enforces the query budget exactly.

\begin{figure}[!h]
  \centering
  \includegraphics[width=0.95\textwidth]{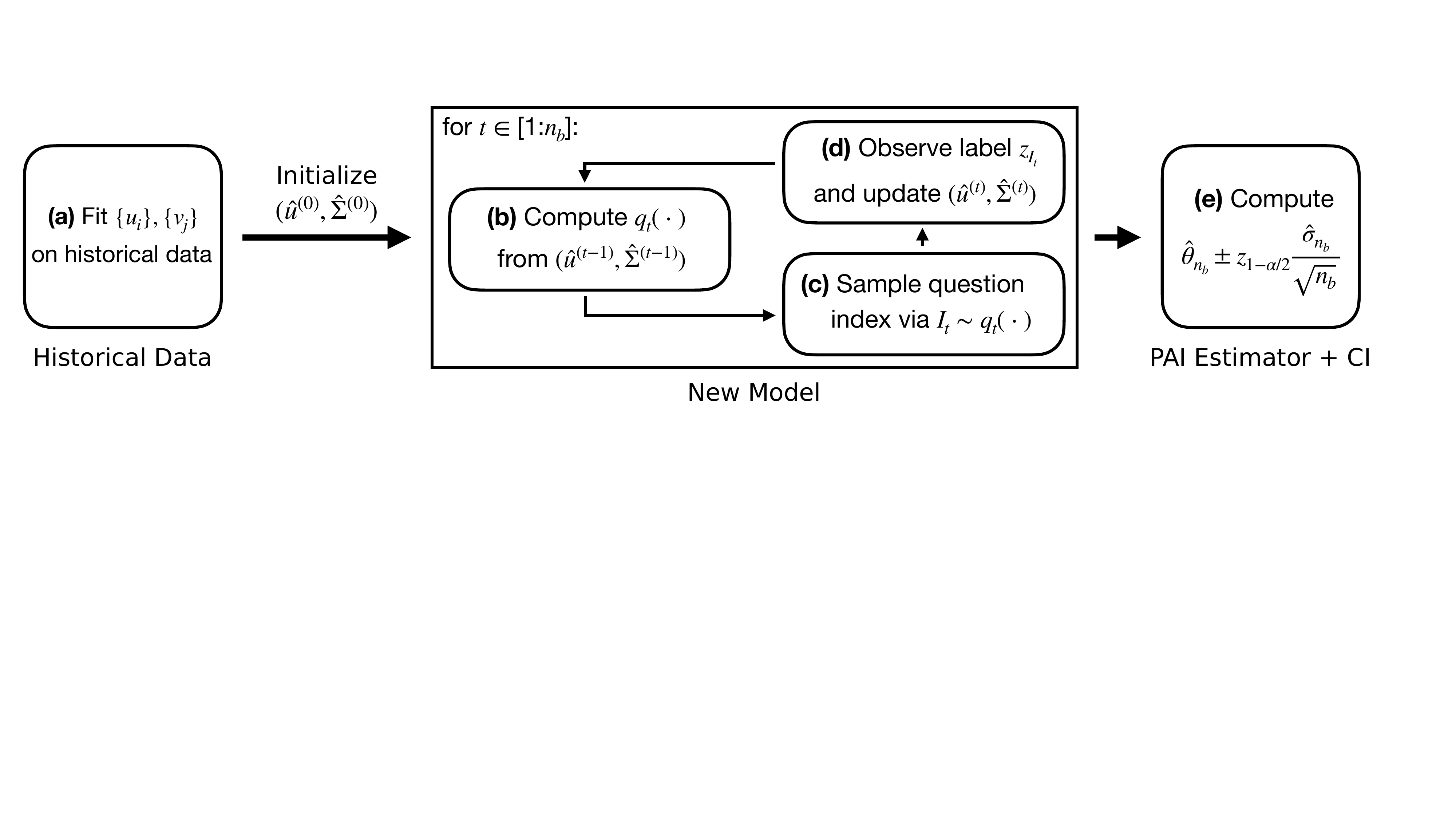}
    \caption{\textbf{High-level overview of Factorized Active Querying (FAQ).} \textbf{(a)} Using partially-observed historical outcomes $H$, we fit historical model and question factors (Eq. \ref{eq:factor-model-objective}). For a new model, we initialize its latent factor from the historical fit. Then, at each sampling round $t$, we \textbf{(b)} compute the hybrid sampling policy $q_t(\cdot)$ from the current factor posterior (Eq. \ref{eq:final-hybrid-sampling-policy}), \textbf{(c)} sample a question $I_t \sim q_t(\cdot)$, and \textbf{(d)} update the model factor using the observed outcome $z_{I_t}$ (Eqs. \ref{eq:bayesian-factor-model-update-1}-\ref{eq:bayesian-factor-model-update-2}). Finally, we \textbf{(e)} output the PAI estimate and confidence interval (Eqs. \ref{eq:proposed-estimator}, \ref{eq:estimator-variance}).}
    \label{fig:methods}
\end{figure}

\textbf{Active learning}\;\;\; FAQ draws on Bayesian active learning ideas \citep[c.f. e.g.,][]{settles2009active,di2023active,mackay1992information,fedorov2013theory,schein2007active} to guide question selection. One component of our hybrid policy upweights questions predicted (under the factor model) to yield larger posterior variance reductions for $\theta$. Because our goal is tight \textit{frequentist} uncertainty quantification of $\theta$, we combine this active-learning component with an oracle-inspired variance-reduction component.

\textbf{LLM evaluations}\;\;\; FAQ contributes to a growing literature on evaluating LLMs using only a fraction of benchmark questions \citep{hardt2025emerging, zhang2025benchmark,vivek2024anchor,polo2024tinybenchmarks,zhong2025efficient,li2024active}. However, most of this line of work targets point estimation of accuracy: with the exception of a simple AIPW estimator considered in \citet{zhang2025benchmark}, \textit{none of these prior works provide any model-free guarantees---on point estimation or coverage.} For example, \citet{li2024active} assume a probabilistic model of response correctness given questions, imposing much stronger assumptions on the benchmark/outcome-generating process than our fixed-bank, model-free setup. Relatedly, \citet{zouhar2025select,zhou2025speeding,perlitz2024efficient} focus on model ranking/selection under limited queries, which is orthogonal to our goal of estimating model accuracy with rigorous uncertainty quantification. \cite{perlitz2024efficient} additionally study benchmark design, which is very different from our goal. Closest in spirit to our work is \citet{angelopoulos2025cost}, which uses active inference to reduce the cost of rating LLM outputs; by contrast, we reduce the number of inference calls per model. More importantly, \citet{angelopoulos2025cost}'s approach posits an i.i.d.~model for the benchmark data and targets the \textit{expected} accuracy as judged by a strong autorater, whereas we estimate the \textit{true, finite-bank} accuracy of each LLM without i.i.d.~assumptions. This distinction is practically important: new models arrive over time and distribution shift is the norm, potentially breaking the i.i.d.~assumption. By targeting finite-bank accuracy, FAQ provides uncertainty quantification that remains meaningful over such drifts.

Because the above approaches generally target different estimands/tasks---point estimation, ranking/selection, benchmark design, or superpopulation accuracy---and generally lack model-free coverage guarantees for each model's finite-bank accuracy, they are not comparable inferential baselines: as such, our empirical baselines (c.f. Section \ref{section:experimental-setup}) comprise procedures that do have such guarantees.

\section{Method}

\subsection{Factor Model: Extracting Information from History}
\label{subsection:factor-model}

To run FAQ, we require features for models and questions, but the partially-observed historical outcome matrix $H$ provides no explicit covariates. Thus, we introduce latent factors---each historical model $i$ has $u_i \in \R^k$ and each question $j$ has $v_j \in \R^k$. We model the likelihood that model $i$ answers question $j$ correctly via a logistic link, where $\sigma(\cdot)$ is the sigmoid function:
{
\begin{align}
    P\left( H_{ij} = 1 \mid \{ u_{i'} \}_{i'=1}^{N_{\text{old}}}, \{ v_{j'} \}_{j'=1}^{N_q} \right) = \sigma\left( u_i^\top v_j \right).
\end{align}}
This model is natural for heterogeneous benchmarks, where performance is driven by a small number of shared capabilities (e.g., coding, reasoning, and domain knowledge): $u_i$ encodes model $i$'s latent proficiencies, $v_j$ the capabilities required by question $j$, and $u_i^\top v_j$ their match. The resulting probabilistic predictions provide uncertainty over unqueried model-question pairs, which we exploit in our hybrid sampling policy. We emphasize that the factor model only serves to improve efficiency: our FAQ coverage guarantees are model-free and do not depend on it.

\textbf{Fitting on partially-observed historical data}\;\;\; We fit the logistic factor model on the observed entries of $H$ by minimizing the masked negative log-likelihood via AdamW with weight decay $\lambda$: $(\lambda, k)$ are selected via cross-validation (see Section \ref{section:experimental-setup}). The full objective is given in Appendix \ref{appendix:full-log-likelihood}.

\textbf{Deploying to evaluate a new model}\;\;\; At deployment, we fix the historically-learned question factors $\{ v_{j} \}_{j=1}^{N_q}$ as features, since questions do not change across models. For a new model, we initialize $u \sim \mathcal{N}( \uhat^{(0)}, \Sigmahat^{(0)} )$ from the empirical mean and covariance of historical model factors $\{ u_i \}_{i=1}^{N_\text{old}}$. The factor model is only a working approximation for querying and prediction, \textit{not} for Bayesian credible intervals: we make no fidelity assumptions. Frequentist validity is ensured by PAI (Section \ref{subsection:finite-pop-active-inference}). 

\textbf{Online Laplace updates}\;\;\; At round $t$, after observing $z_{I_t}$, we update $u \mid \F_t \approx \mathcal{N}(\uhat^{(t)}, \Sigmahat^{(t)})$ via $\Sigmahat^{(t)}, \uhat^{(t)} \leftarrow \texttt{ModelFactorUpdate}(z_{I_t}, v_{I_t}, \uhat^{(t-1)}, \Sigmahat^{(t-1)})$, a computable Laplace update derived and specified in Appendices \ref{appendix:log-reg-derivation} and \ref{appendix:factor-model-online-laplace-updates}. This refines predictions $\phat^{(t)}_j = \sigma( \uhat^{(t)}{}^\top v_{j})$.

\subsection{Pro-Active Inference}
\label{subsection:finite-pop-active-inference}

\textbf{Motivation and desiderata}\;\;\; The factor model is only a working model and offers no coverage guarantee. We also make no i.i.d.~assumptions on benchmark questions: labels $z_j$ are fixed-but-unknown, since curated suites often include adversarial/edge cases and need not represent a superpopulation of user queries. Thus, we need a \textit{model-free} inference layer that wraps around the factor model and yields valid coverage for the finite-bank estimand $\theta$.

Sequential active inference \citep{zrnic2024active} provides coverage, but follows \textit{nature's order}: at time $t$, one observes only the next item in a fixed stream and can only make a label/skip decision via an adaptive Bernoulli policy. If an informative item is skipped, it cannot be revisited. In our benchmark-bank setting, by contrast, we have access to the \textit{full} pool of question factors at each step and wish to \textit{proactively select} the next question, labeling each selected question deterministically.

These desiderata are important for FAQ because the new model factor $u$ is initialized from historical models and may be inaccurate as models evolve. Proactively selecting informative questions early accelerates learning of $u$ (Section \ref{subsection:active-learning-sampling-policy}), potentially improving subsequent query choices and inference. Practically, our desiderata also better match typical evaluation workflows: selecting a budgeted subset of questions, rather than streaming through the entire bank with probabilistic labeling decisions. 

\textbf{Pro-Active Inference estimator}\;\;\; Fix a query budget $n_b$. At each round $t = 1, \dots, n_b$, we sample a question $I_t \in \{ 1, \dots, N_q \}$ from an adaptive distribution $q_t(\cdot)$,\footnote{To support Assumptions \ref{ass:var-stabilize}--\ref{ass:lindeberg} for Theorem \ref{thm:martingale-clt} we require sampling with replacement. Appendix~\ref{appendix:without-replacement} discusses how ad-hoc without-replacement variants can fail to yield valid coverage.} query the new model, and observe label $z_{I_t} \in \{ 0, 1 \}$. We define the \textit{Pro-Active Inference} (PAI) estimator $\thetahat_{n_b}$:
{
\begin{align}
\label{eq:proposed-estimator}
    \thetahat_{n_b} := \frac{1}{n_b}\sum_{t=1}^{n_b} \phi_t,\text{ }\phi_t := \frac{1}{N_q}\sum_{j=1}^{N_q} \phat_{j}^{(t-1)} + \frac{1}{N_q}\frac{z_{I_t} - \phat_{I_t}^{(t-1)}}{q_t\left( I_t \right)}.
\end{align}}
Here, $\phat_j^{(t-1)} := \sigma\left({\uhat^{(t-1)}}{}^{\top} v_j \right)$ and $\uhat^{(t-1)}$ is the current posterior-mean estimate of the new model factor. The first term in $\phi_t$ is the plug-in factor-model estimate of $\theta$, while the second serves as a bias correction. Let $\F_{t-1} := \sigma(\{I_s, z_{I_s}\}_{s=1}^{t-1})$. We require $\phat_j^{(t-1)}$ and $q_t(\cdot)$ to be $\F_{t-1}$-measurable: depending only on past queries/labels. In Theorem~\ref{thm:martingale-clt} (proven in Appendix~\ref{appendix:proof-martingale-clt}), we show that $\thetahat_{n_b}$ is unbiased for $\theta$ and derive its limiting distribution via a martingale central limit theorem (CLT). To formalize asymptotics, we work under a triangular-array setup indexed by $m$. Specifically, we consider a sequence of problems $\{ \mathcal{P}^{(m)} \}_{m \geq 1}$, in which $N_q = N_q^{(m)}$, $n_b = n_b^{(m)}$, and all involved quantities---$\{ z_{1}, \dots, z_{N_q} \}$,
$\theta$, $\thetahat_{n_b}$, $\{ \phat^{(t)} \}_{0 \leq t < n_b}$, and $\{ q_t(\cdot) \}_{1 \leq t \leq n_b}$---may depend on $m$. For readability, we suppress the superscript $(m)$ in the statement and proof.
\begin{theorem}\label{thm:martingale-clt}
Consider a sequence of problems (indexed by $m$) such that $n_b\uparrow \infty$ as $m \rightarrow \infty$ and $n_b \leq N_q$ for every $m$. Then (a) $\thetahat_{n_b}$ is unbiased for $\theta$ for every $m$; and (b) under mild regularity conditions (variance stabilization/control and Lindeberg; Assumptions \ref{ass:var-stabilize}--\ref{ass:pred-prob}), $\thetahat_{n_b}$ obeys $\sqrt{n_b}( \thetahat_{n_b} - \theta)/\sigmahat_{n_b} \toD \mathcal{N}(0, 1)$, where $\sigmahat_{n_b}^2$'s computable closed-form expression is in Appendix \ref{appendix:computable-closed-form-variance}.
\end{theorem}
By Theorem \ref{thm:martingale-clt}, it follows that an asymptotic $(1-\alpha)$-level CI for $\theta$ is $[ \thetahat_{n_b} \pm z_{1 - \alpha / 2} (\sigmahat_{n_b} / \sqrt{n_b}) ]$. Although the theorem is stated for a sequence of problem instances, we observe a single one in practice. Consequently, Theorem~\ref{thm:martingale-clt} implies that the above CI attains coverage at approximately the nominal level $1 - \alpha$ for sufficiently large problem sizes.

\subsection{An Oracle-Optimal Sequential Sampling Policy}
\label{subsection:oracle-optimal}

In practice, constructing a sequential policy $q_t(\cdot)$ that yields narrow CIs is difficult, partly because $\phat_j^{(t)}$ evolves over rounds $t$. We therefore derive guidance from an oracle idealization. For this subsection only, assume an oracle superpopulation model: (a) $z_j \sim \text{Bern}(p_j)$ independently across questions $j$;
(b) the oracle has access to the true, time-invariant $\{ p_j \}$; and (c) the target is $\theta^* := N_q^{-1} \sum_{j=1}^{N_q} p_j$.
As a corollary of Theorem \ref{thm:martingale-clt}(a), $\thetahat_{n_b}$ is unbiased for $\theta^*$. A natural direction is to minimize its variance in the oracle world (proof in Appendix~\ref{appendix:proof-oracle-min-var}):\footnote{While access to $\{ p_j \}$ makes $\theta$ known, we wish to find the variance-minimizing policy among estimators of form $\thetahat_{n_b}$.}
\begin{theorem}
    \label{thm:oracle-min-var} In the aforementioned oracle setup, the variance-minimizing policy $q_t^*(j)$ is time-independent and uniquely defined by $q_t(j) \propto \sqrt{p_j (1-p_j)}$.
\end{theorem}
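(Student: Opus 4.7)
Under the oracle superpopulation model, I may set $\phat_j^{(t-1)} = p_j$ for every $t,j$, since the $p_j$ are known and time-invariant. The PAI estimator in \eqref{eq:proposed-estimator} then simplifies, round by round, to
\begin{equation*}
    \phi_t = \theta^* + \frac{1}{N_q}\cdot\frac{z_{I_t} - p_{I_t}}{q_t(I_t)},
\end{equation*}
and since $\E[z_{I_t} - p_{I_t} \mid I_t, \F_{t-1}] = 0$ under the oracle Bernoulli model, $\{\phi_t - \theta^*\}$ is a martingale difference sequence adapted to $\{\F_t\}$. Martingale orthogonality then gives
\begin{equation*}
    \Var(\thetahat_{n_b}) = \frac{1}{n_b^2}\sum_{t=1}^{n_b} \E\bigl[(\phi_t - \theta^*)^2\bigr].
\end{equation*}

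Next, a direct conditional calculation using the tower property and $\Var(z_j) = p_j(1-p_j)$ yields
\begin{equation*}
    \E\bigl[(\phi_t - \theta^*)^2 \mid \F_{t-1}\bigr] = \frac{1}{N_q^2}\sum_{j=1}^{N_q}\frac{p_j(1-p_j)}{q_t(j)}.
\end{equation*}
I would then minimize $g(q) := \sum_{j=1}^{N_q} p_j(1-p_j)/q(j)$ over the probability simplex. Cauchy--Schwarz gives
\begin{equation*}
    \left(\sum_{j=1}^{N_q}\sqrt{p_j(1-p_j)}\right)^{2} \le g(q)\sum_{j=1}^{N_q}q(j) = g(q),
\end{equation*}
with equality if and only if $q(j) \propto \sqrt{p_j(1-p_j)}$.

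The main conceptual point---and the only mildly non-trivial step---is that a data-adaptive, time-varying policy cannot beat the constant oracle policy. This falls out cleanly because the minimizer of the conditional variance at round $t$ depends only on the fixed $\{p_j\}$, not on $\F_{t-1}$, so setting $q_t \equiv q^*$ with $q^*(j) \propto \sqrt{p_j(1-p_j)}$ minimizes $\E[(\phi_t - \theta^*)^2 \mid \F_{t-1}]$ pointwise in $\omega$ for every $t$, and hence minimizes $\Var(\thetahat_{n_b})$ across all adapted policies simultaneously. Uniqueness is strict when $p_j \in (0,1)$ for all $j$; when some $p_j \in \{0,1\}$, the formula $\sqrt{p_j(1-p_j)} = 0$ correctly assigns $q(j) = 0$ on those indices, and uniqueness then holds among policies restricted to the support.
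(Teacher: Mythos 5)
Your proposal is correct and follows essentially the same route as the paper: reduce to the oracle estimator, use the martingale-difference structure to kill cross terms, compute the conditional variance $\frac{1}{N_q^2}\sum_j p_j(1-p_j)/q_t(j)$, and observe that its minimizer does not depend on $\F_{t-1}$ or $t$, so the constant policy is optimal among all adapted policies. The only (cosmetic) difference is that you close the per-round minimization via Cauchy--Schwarz with its equality condition, whereas the paper sets up a Lagrangian and invokes strict convexity; both give the same unique solution $q^*(j) \propto \sqrt{p_j(1-p_j)}$ (modulo the degenerate $p_j \in \{0,1\}$ indices, which you handle at least as carefully as the paper does).
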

\subsection{An Active-Learning-Driven Sampling Policy}
\label{subsection:active-learning-sampling-policy}

In practice, we use the factor model to estimate question difficulty as $\phat_j^{(t-1)} := \sigma({\uhat^{(t-1)}}{}^\top v_j)$. Since question factors $\{ v_j \}$ are fixed, prediction quality depends on how well we have learned $\uhat^{(t-1)}$---motivating an active-learning component that quickly learns $u$, but only insofar as it improves inference for $\theta^*$. For this subsection only, assume the factor model is correctly specified: (a) $z_{j} \sim \text{Bern}\left( p_j \right)$ with $p_j := \sigma\left( u^\top v_{j} \right)$; (b) $u$ has a normal prior; and (c) $\theta^*$ is random only through $u$:\footnote{We emphasize that the factor model is only a \emph{mental model} that helps us in deriving a reasonable adaptive sampling policy.}
\begin{align*}
    \theta^* = \frac{1}{N_q} \sum_{j=1}^{N_q} p_j = \frac{1}{N_q} \sum_{j=1}^{N_q} \sigma\left( u^\top v_{j} \right) =: g(u).
\end{align*}
Given $u \mid \F_{t-1} \sim \mathcal{N}\left( \uhat^{(t-1)}, \Sigmahat^{(t-1)} \right)$, we seek the query that most reduces the posterior variance of $g(u)$ (c.f. \citet{mackay1992information}). To obtain a closed-form expression, we approximate the updated posterior $u \mid \F_{t-1}, z_j$ as normal and linearize $g$ around $\uhat^{(t-1)}$. Under these approximations, the index $I_t$ that maximally reduces $\theta^*$'s posterior variance is given by $I_t = \argmax_j d^{(t)}(j)$, whose computable closed-form expression is presented in Appendix \ref{appendix:computable-closed-form-posterior-variance} and derived in Appendix~\ref{appendix:proof-theta-glm-variance-reduction}.

\subsection{A Practical Hybrid Sampling Policy}

In practice, we trade off exploitation (variance-reduction using the current $\phat^{(t-1)}$) and exploration (querying informative questions to quickly refine $\uhat^{(t)}$ and hence $\phat^{(t)}$). We implement this via a time-varying hybrid sampling policy. Inspired by Sections \ref{subsection:oracle-optimal}--\ref{subsection:active-learning-sampling-policy}, at time $t$, define unnormalized oracle (o) and active-learning (a) scores as follows: $s^{(t)}_{\text{o}}(j) := \sqrt{\phat_j^{(t-1)} ( 1 - \phat_j^{(t-1)} )}, \; s^{(t)}_{\text{a}}(j) := d^{(t)}(j).$
Define $\text{Norm}(x)(j) = x(j) / \sum_{j'} x(j')$. We convert each score to a probability vector via $h^{(t)}_{\text{o}} = \text{Norm}(s^{(t)}_{\text{o}})$ and $h^{(t)}_{\text{a}} = \text{Norm}(s^{(t)}_{\text{a}})$.
Given hyperparameters $\rho, \gamma, \beta_0, \tau \in (0, 1)$, set mixing weight $\alpha_t = \max\left( 0, 1 - \frac{t}{\rho n_b} \right)$ and tempering exponent $\beta_t = \beta_0 \min\left( 1, \frac{t}{\gamma n_b}\right)$.
We mix the normalized scores and apply tempering to form our combined-and-tempered score $h_{\text{cat}}^{(t)}$:
{\begin{align}
    \label{eq:combined-and-tempered-score}
    h_{\text{cat}}^{(t)}(j) = \left( \left( 1 - \alpha_t \right) h^{(t)}_{\text{o}}(j) + \alpha_t h^{(t)}_{\text{a}}(j) \right)^{\beta_t}.
\end{align}}
Finally, we normalize $h_{\text{cat}}^{(t)}(\cdot)$ to be a probability vector and apply $\tau$-uniform mixing to lower-bound all sampling probabilities. Our final hybrid policy $q_t(\cdot)$ is given by:
{\begin{align}
    \label{eq:final-hybrid-sampling-policy}
    q_t(j) = \frac{\tau}{N_q} + \left( 1 - \tau \right) \text{Norm}(h_{\text{cat}}^{(t)})(j).
\end{align}}
Here, $\alpha_t$ sets the exploration-to-exploitation schedule (decaying to $0$ after $\rho n_b$ queries), while $\beta_t$ tempers early scores towards near-uniform exploration (i.e., scores closer to each other) and reaches $\beta_0$ after $\gamma n_b$ queries. The tempering and $\tau$-mixing ensure that all questions retain nontrivial selection probability---supporting Theorem \ref{thm:martingale-clt}'s regularity conditions.

We initialize a new model's latent factor as $u \sim \mathcal{N}( \uhat^{(0)}, \Sigmahat^{(0)} )$. For rounds $t=1, \dots, n_b$, FAQ:
\begin{enumerate}[noitemsep, topsep=0pt, parsep=0pt, label=(\alph*)]
    \item Updates $q_t(\cdot)$ under the current factor-model and samples $I_t \sim q_t(\cdot)$ (Eq. \ref{eq:final-hybrid-sampling-policy}).
    \item Queries the model to observe $z_{I_t}$.
    \item Updates the PAI estimate (Eq. \ref{eq:proposed-estimator})
    \item Updates $(\uhat^{(t)}, \Sigmahat^{(t)})$ (Eqs. \ref{eq:bayesian-factor-model-update-1}--\ref{eq:bayesian-factor-model-update-2}).
\end{enumerate}
After $n_b$ rounds, we estimate the variance and form a CI (Eq. \ref{eq:estimator-variance}). Algorithm \ref{alg:full-faq} gives pseudocode.

\begin{algorithm}[!h]
  \caption{Factorized Active Querying}
  \label{alg:full-faq}
  \begin{algorithmic}[1]
    \STATE {\bfseries Input:} Historical data matrix $H$, confidence level $\alpha$, and budget $n_b$
    \STATE Fit factor model on $H$ to initialize prior $\mathcal{N}(\hat{u}^{(0)}, \hat{\Sigma}^{(0)})$ for new model's latent factor.
    \FOR{$t=1$ {\bfseries to} ${n_b}$}
    \STATE Compute hybrid probability vector over queries at time $t$: \[q_t(j) = \frac{\tau}{N_q} + (1-\tau)\mathrm{Norm}(h^{(t)}_{\mathrm{cat}})(j),\] where $h^{(t)}_{\mathrm{cat}}$ is the tempered convex combination of oracle variance-reduction and active-learning-driven policies given in Equation \eqref{eq:combined-and-tempered-score}.
    \STATE Sample new index $I_t \sim q_t(\cdot)$ and query LLM with question $I_t$ to obtain $z_{I_t}$.
    \STATE Compute \[\phi_t := \frac{1}{N_q}\sum_{j=1}^{N_q}\hat{p}_j^{(t-1)} + \frac{1}{N_q}\cdot\frac{z_{I_t}-\hat{p}_{I_t}^{(t-1)}}{q_t(I_t)}.\]
    \STATE Update factor model prior via \[(\uhat^{(t)}, \Sigmahat^{(t)}) \gets (\texttt{ModelFactorUpdate}(z_{I_t}, v_{I_t}, \uhat^{(t-1)}, \Sigmahat^{(t-1)}),\] given in Equations~\eqref{eq:bayesian-factor-model-update-1}--\eqref{eq:bayesian-factor-model-update-2}.
    \ENDFOR
    \STATE Compute the PAI variance estimate given in Equation~\eqref{eq:estimator-variance}: \[\sigmahat_{n_b}^2 = \frac{1}{{n_b} N_q^2} \sum_{t=1}^{n_b} \frac{\left( z_{I_t} - \phat_{I_t}^{(t-1)}\right)^2}{q_t(I_t)^2} - \frac{1}{{n_b} N_q^2} \sum_{t=1}^{n_b} \left( \frac{1}{{n_b}}\sum_{s=1}^{n_b} \frac{z_{I_s}}{q_s(I_s)} - \sum_{j=1}^{N_q} \phat_j^{(t-1)} \right)^2,\] using samples $\{z_{I_t}\}_{t=1}^{n_b}$, computed probability vectors $\{q_t(\cdot)\}_{t=1}^{n_b}$, and factor model estimates $\{\hat{p}_j^{(t)}\}_{j \in [N_q], t \in [0:{n_b-1}]}$.
    \STATE \textbf{Return} Estimate $\hat{\theta}_{n_b} := \frac{1}{{n_b}}\sum_{t=1}^{n_b}\phi_t$ and $(1-\alpha)$-CI: $\left[\hat{\theta}_{n_b} \pm z_{1-\alpha/2}\cdot \frac{\sigmahat_{n_b}}{\sqrt{{n_b}}}\right].$
    \end{algorithmic}
\end{algorithm}

\section{Experimental Setup}
\label{section:experimental-setup}

\textbf{Datasets}\;\;\; Using the HuggingFace \texttt{datasets} API, we collect per-question 0/1 outcomes for $4.4$K models from the Open LLM Leaderboard \citep{open-llm-leaderboard-v2, myrzakhan2024open}. Each model is evaluated on $21.6$K questions across six datasets, which we aggregate into two suites: (a) MMLU-Pro with $12$K questions; and (b) BBH+GPQA+IFEval+MATH+MuSR with $9.5$K questions. Full dataset citations are in Appendix \ref{appendix:experimental-details}. We release our compiled datasets to support future research. We focus on these suites because they provide large, reusable model-by-unit outcome matrices; interactive agent benchmarks with task-level success outcomes are natural targets for FAQ/PAI, but running them at comparable scale would require substantially greater evaluation infrastructure.

\textbf{Historical splits and missingness}\;\;\; For each suite, we sort models by release date, take the first $2.2$K as historical $H$ and the next $2.2$K as new test models. We induce missingness in $H$ by selecting $n_{\text{full-obs}}$ fully-observed rows uniformly at random, and independently masking each remaining entry with probability $1 - p_{\text{obs}}$ (i.e., MCAR: missing completely at random). Details are in Appendix \ref{appendix:experimental-details}.

\textbf{Factor model selection}\;\;\; For each non-cold-start missingness setting, we split $H$ by rows (models) into $H_{\text{train}}$ (first $80$\%) and $H_{\text{val}}$ (last $20\%$). We choose $(\lambda, k)$ via 5-fold cross-validation on $H_{\text{train}}$: (a) fit on observed entries (Eq. \ref{eq:factor-model-objective}) and evaluate binary prediction accuracy on an MCAR-masked holdout set; and (b) apply the 1-SE rule \citep{friedman2010}.

\textbf{Budgets and coverage}\;\;\; We consider ten evenly-spaced budgets $n_b \in [2.5\%, 25\%]$ of the question bank and target $95\%$ nominal coverage for all experiments.

\textbf{FAQ hyperparameters}\;\;\; For each non-cold-start budget/missingness setting, we tune $(\rho, \gamma, \beta_0, \tau)$ to minimize mean CI width on $H_{\text{val}}$ (averaged over 5 seeds), using a factor model refit on all of $H_{\text{train}}$.\footnote{On $H_\text{val}$, $(\beta_0, \tau) = (1.0, 0.05)$ is a robust default across budget/missingness settings. $(\rho, \gamma)$ are more setting-dependent.} For tuning, we run FAQ for $n_b$ rounds per validation model, querying selected questions when labels are missing. This mirrors deployment, where tuning expends additional queries on held-out models.

\textbf{Baselines}\;\;\; Let $\pbar_j$ be the mean of the \textit{observed} outcomes in $H$ for question $j$\footnote{If question $j$ is never observed in $H$ (e.g., when $0.1\%$ entries in $H$ are observed), we impute $\pbar_j$ with $H$'s global mean.}, used as a proxy for question difficulty. For each budget/missingness setting, we compare to sequential active-inference (AIPW) baselines that follow \textit{nature's order}: questions arrive in a fixed stream and question $t$ is labeled with probability $\pi_t(t)$. Let $\xi_t \sim \text{Bern}(\pi_t(t))$ indicate labeling. Each baseline uses
{\begin{align}
\label{eq:baseline-aipw}
\thetahat_{\text{base}}
=
\frac{1}{N_q}\sum_{t=1}^{N_q}
\left(
p_{\text{base}}(t)
+
\left( z_{t}-p_{\text{base}}(t)\right)\frac{\xi_t}{\pi_t(t)} \right),
\end{align}
with baseline predictor $p_{\text{base}}(t) \in \{ 0, \pbar_t \}$. We consider: (a) $\pi_t(t) = n_b / N_q$ (Bernoulli); (b) $\pi_t(t) \propto \sqrt{\pbar_t (1 - \pbar_t)}$ (Neyman); and (c) $\pi_t(t) \propto \min(\pbar_t, 1-\pbar_t)$ (c.f. \citet{zrnic2024active}). For (b)-(c), we normalize, apply $\tau$-mixing, and use the budget-stabilization wrapper in Equation 8 of \citet{zrnic2024active}. Note that $p_{\text{base}}(t) = 0$ and $\pi_t(t) = n_b / N_q$ reduces to \textit{uniform sampling.}}

\textbf{Traditional active inference ablation}\;\;\; Since the baselines above omit the factor model---a core component of FAQ---we additionally ablate FAQ to isolate the contribution of PAI and our factor-model-guided hybrid sampling policy. We replace PAI and hybrid sampling with \citet{zrnic2024active}'s sequential active inference---keeping the same factor-model predictions $\phat_j^{(t)}$, using their suggested labeling policies (Eqs.~3 and 8 therein), and tuning their $\tau$-mixing parameter (Appendix \ref{appendix:experimental-details}).

\begin{figure}[!t]%
  \centering%
  \includegraphics[width=1.0\textwidth]{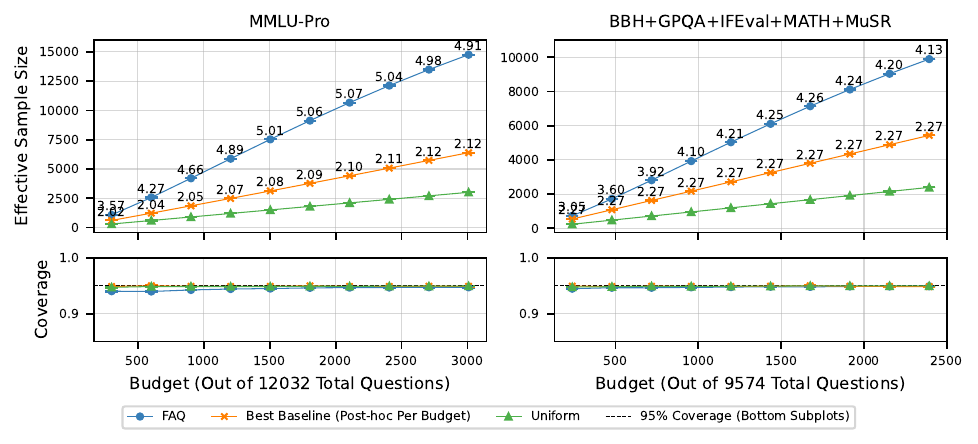}%
    \caption{\textbf{Evaluating $2.2$K LLMs on MMLU-Pro and BBH+GPQA+IFEval+MATH+MuSR with fully-observed historical data.} \textbf{(Top Row)} Effective sample size (ESS) vs. budget for FAQ, the strongest baseline per budget (selected post-hoc, favoring baselines), and uniform sampling. Numbers annotate the ESS multiplier relative to uniform sampling. For example, on MMLU-Pro at budget $1500$, an ESS multiplier of $5.01$ means FAQ matches uniform sampling's CI width at budget $7515$ (a $5.01\times$ gain). \textbf{(Bottom Row)} Empirical coverage of the corresponding $95\%$ normal-approximation CIs, averaged over $2.2$K test models and $100$ seeds. Standard errors are shown, but negligible on the plot scale. \textbf{Higher ESS and ESS-multiplier values indicate better performance.}}%
    \label{fig1:ess+coverage_fully-observed}%
\end{figure}%
\begin{figure}[!b]%
  \centering%
    \includegraphics[width=0.95\textwidth]{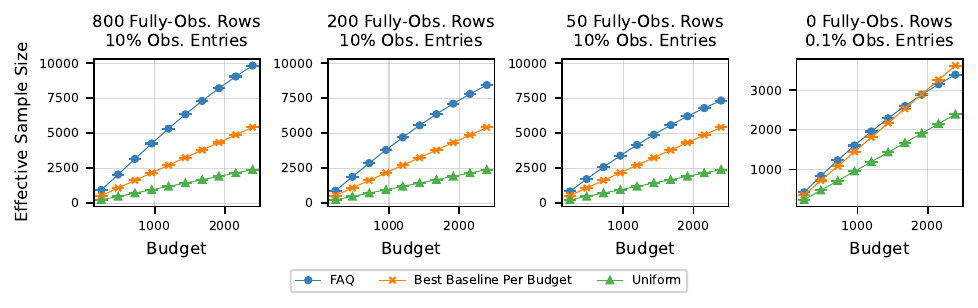}%
    \caption{\textbf{ESS under missing historical data on BBH+GPQA+IFEval+MATH+MuSR.} ESS vs.~budget across missingness settings (Section \ref{section:experimental-setup}). Standard errors are shown, but negligible on the plot scale. MMLU-Pro and coverage results are in Appendix \ref{appendix:additional-results}.}%
    \label{fig2_ess_sparsity-settings}%
\end{figure}%
\textbf{Metrics}\;\;\; We evaluate (a) coverage and (b) effective sample size (ESS). For each setting, we run $100$ random seeds for each of $2.2$K test models. In each seed/model, methods output Wald intervals of the form $[ \thetahat_{\text{method}} \pm 1.96 \sigmahat_{\text{method}} ]$ (clipped to be in $[0, 1]$), where $\sigmahat_{\text{method}}$ is computed via the method's plug-in variance formula at budget $n_b$. Coverage is computed as the fraction of intervals containing $\theta$, averaged over models and seeds. ESS at budget $n_b$ is defined as
$n_{\text{eff}} :=(\sigmahat^2_{\text{uniform}}/\sigmahat^2_{\text{method}}) n_b,
$ where $\sigmahat_{\text{uniform}}^2$ is the plug-in variance of uniform sampling at the same budget $n_b$. Intuitively, $n_{\text{eff}}$ is the uniform-sampling budget that would yield the same variance as the method at budget $n_b$: thus, larger ESS implies stronger performance. We emphasize ESS (over raw interval width) since it directly corresponds to query-cost savings: an ESS multiplier of $m$ means matching uniform sampling's CI width with roughly a $1/m$ fraction of the queries.\footnote{As all tested methods are unbiased and asymptotically normal, ESS gains translate into proportional reductions in MSE.}

\textbf{Good-faith evaluation}\;\;\; FAQ is tuned using only historical train/validation data (without access to test models). In contrast, to avoid understating baseline performance, we report baselines \textit{after post-hoc selection}: for each budget/dataset/missingness setting, we sweep baseline/ablation variants/hyperparameters on the test models (100 seeds) and report the best configuration (averaged across seeds), yielding an oracle-style upper bound that favors the baselines.

\textbf{Cold-start and cross-suite transfer}\;\;\; While FAQ as presented above leverages target-suite historical data to fit the factor model, practitioners may ask how it performs on a new benchmark before such data have accumulated. We therefore explore a cold-start factor-model experiment on GPQA (All), initializing factors via MMLU-Pro transfer: question factors use weighted $k$-NN over LLM-embedding cosine similarities of question text, while model factors come from the MMLU-Pro source fit. Because no target-suite historical matrix is available for factor-model selection, we use a fixed a priori factor-model configuration. FAQ sampling hyperparameters are tuned analogously on held-out validation models and minimizing CI width only. Full details are in Appendix \ref{appendix:experimental-details}.

Hyperparameter search ranges are in Appendix \ref{appendix:experimental-details}. Full results (including widths) are in Appendix \ref{appendix:additional-results}.

\section{Empirical Results}

Our primary experiments compare FAQ's ESS and coverage against uniform sampling and post-hoc-selected baselines across two benchmark suites and varying historical-data missingness levels (Figures \ref{fig1:ess+coverage_fully-observed}--\ref{fig2_ess_sparsity-settings}). We also audit FAQ's per-model coverage under realistic deployment shifts (Figure \ref{fig3_per-model-coverages-7.5-bbh}) and explore a cold-start transfer without target-suite historical data for factor-model fitting (Figure \ref{figx:cold-start}). A factor-model-matched active-inference ablation is summarized below, with full results in Appendix~\ref{appendix:traditional-active-inference-ablation}. Per setting, one parallelized FAQ run over $2.2$K models and $12$K questions takes under one minute on a single H100 GPU: negligible compared to model inference/evaluation time.

\textbf{Fully-observed historical data}\;\;\; With fully-observed $H$, FAQ achieves: (a) $4$--$5\times$ higher ESS than uniform sampling; (b) $1.8$--$2.4\times$ higher ESS than the strongest post-hoc baseline; and (c) empirical coverage near the $95\%$ target (Figure \ref{fig1:ess+coverage_fully-observed}). These gains are stable across budgets and translate to roughly proportional query-cost savings. At the smallest budgets, FAQ shows mild ESS degradation and miscoverage under $1\%$---likely from noisy early factor estimates and less accurate CLT normal approximations, respectively.\footnote{Future work could explore $t$- or bootstrap-calibrated (over the martingale increments) quantiles.} Even there, FAQ still considerably outperforms all baselines.

\begin{figure}[!t]%
  \centering%
    \includegraphics[width=0.95\textwidth]{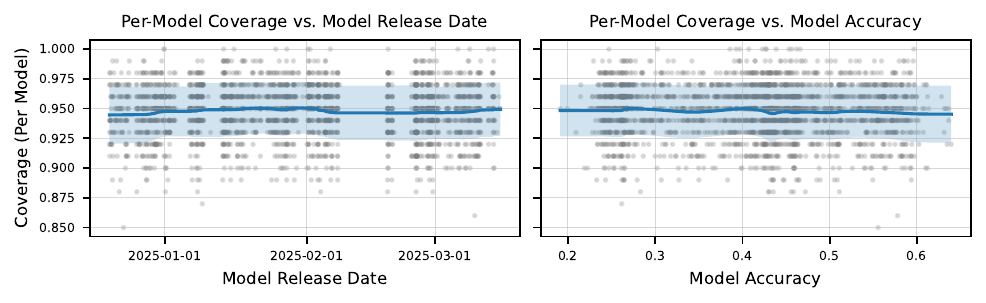}%
    \caption{\textbf{Per-model coverages on BBH+GPQA+IFEval+MATH+MuSR with fully-observed historical data at budget $7.5\%$.} Per-model coverages (averaged over $100$ seeds) vs. \textbf{(Left)} model release date and \textbf{(Right)} true accuracy. Gray dots: individual models; solid-blue curve: locally-weighted mean with a $K=501$ nearest-neighbor Gaussian kernel; light-blue band: local $\pm 1$ SD.}%
    \label{fig3_per-model-coverages-7.5-bbh}%
\end{figure}
\textbf{Partially-observed historical data}\;\;\; Since fully-observed history is rarely available at scale, we evaluate FAQ under progressively sparser $H$. Figure \ref{fig2_ess_sparsity-settings} shows that even with $12$--$42\%$ of $H$ observed (e.g., $n_{\text{full-obs}} \in \{50, 800 \}$ with $p_{\text{obs}} = 0.1$ on remaining rows), FAQ substantially outperforms the strongest post-hoc baselines. Even with only $0.1\%$ of historical entries observed, FAQ retains a modest but consistent ESS edge (Figure \ref{fig2_ess_sparsity-settings}, last column) except at the two largest budgets, where noisy factors learned under such extreme sparsity can be over-trusted as tempering weakens late-budget.

\textbf{Coverage audits under deployment shifts}\;\;\; Figure \ref{fig1:ess+coverage_fully-observed} reports coverage aggregated over $2.2$K models and $100$ seeds. For deployment, it is also important that coverage does not vary systematically across models---e.g., that it is not driven by ``easy" extreme-accuracy models while under-covering moderate, high-uncertainty ones. This is especially relevant here because we learn question factors from historical data and then freeze them, initializing new model factors from the historical mean/covariance. As newer models drift from the historical cohort (e.g., via new architectures or training recipes), these factors may become less representative, potentially impacting finite-sample coverage. Consistent with PAI's guarantee of valid coverage for every evaluated model, Figure \ref{fig3_per-model-coverages-7.5-bbh} shows stable coverage across model release date and model accuracy, with no systematic deviations.

\begin{figure}[!t]%
  \centering%
  \includegraphics[width=0.9\textwidth]{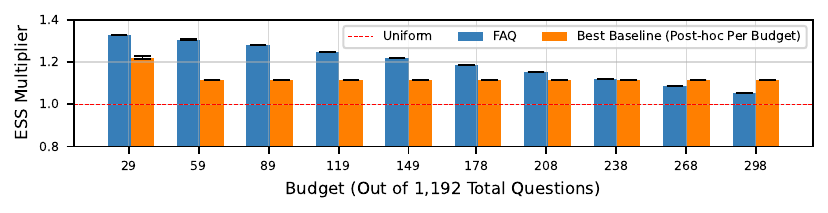}%
    \caption{\textbf{Cold-start FAQ on GPQA (All).} ESS multipliers relative to uniform sampling across budgets for cold-start FAQ \textit{with zero target-suite historical data for factor-model fitting} vs. strongest post-hoc baseline per budget \textit{with access to $0.1\%$ target-suite historical data.} Standard errors are shown, but negligible on the plot scale. \textbf{Higher ESS-multiplier values indicate better performance.}}%
    \label{figx:cold-start}%
\end{figure}

\textbf{Cold-start and cross-suite transfer}\;\;\; FAQ, \textit{with zero target-suite historical data for factor-model fitting}, achieves ESS multipliers up to $1.32\times$ relative to uniform sampling and outperforms the strongest post-hoc selected baselines \textit{with access to $0.1\%$ target-suite historical data} on 8 of 10 budgets (Figure \ref{figx:cold-start}; coverages in Figure \ref{figx:cold-start_appendix}). This suggests that cross-benchmark structure transferred through LLM embeddings can make FAQ useful even before target-suite history accumulates, providing a natural onboarding path for continuous monitoring. 

\textbf{Active-inference ablation}\;\;\; Appendix~\ref{appendix:traditional-active-inference-ablation} compares FAQ to a factor-model-matched sequential active-inference ablation. FAQ yields narrower CIs in the low-budget regime most relevant for query-cost savings, while the sequential ablation can match or outperform FAQ at larger budgets.

\section{Discussion and Conclusion}

We introduced FAQ, a finite-bank framework for LLM evaluation that, under fixed query budgets and negligible overhead, achieves up to $5 \times$ higher ESS than uniform sampling and $2.4 \times$ higher ESS than the strongest baselines---all while maintaining model-free frequentist coverage. FAQ also supports \textit{continuous monitoring}: each evaluation leverages accumulated historical data while adding fresh outcomes, making uncertainty-quantified evaluation progressively more cost-effective. This matters when teams repeatedly compare model variants (e.g., checkpoints, fine-tunes, distillations, quantized models, or deployed systems) under limited query, latency, or adjudication budgets. Even modest per-run query savings can compound over repeated monitoring cycles. 

Our limitations suggest future work. FAQ/PAI offers a framework for statistically rigorous, budget-constrained benchmarking: flexible predictions guide query selection, while PAI preserves valid finite-bank inference. Our simple factor-model instantiation illustrates the core idea, but is not fundamental: FAQ’s coverage is model-free, and any probability-outputting predictive model and adaptive sampling policy satisfying PAI’s regularity conditions would retain valid coverage. Updating question factors after deployment waves, or using drift-aware priors informed by metadata, embeddings, or other side information, could improve efficiency without changing validity. FAQ also applies to any fixed benchmark bank with binary task-completion outcomes, including multi-turn or agentic benchmarks summarized by task success rates. Extending FAQ to non-binary feedback and batched querying would broaden applicability and parallelize evaluation across adjudicators/inference workers.

\newpage

\bibliography{references}
\bibliographystyle{unsrtnat}

\appendix

\section{Proofs, Algorithms, and other Formulas}
\label{appendix:proofs}

\subsection{Full Historical Factor Model Negative Log-Likelihood Objective}
\label{appendix:full-log-likelihood}

Let $O$ be a 0/1-valued $N_{\text{old}} \times N_q$ matrix such that $O_{ij} = 1$ if $H_{ij}$ is observed, else $O_{ij} = 0$. Our historical minimization objective is given by the following:
\begin{align}
\label{eq:factor-model-objective}
    L\left( \{ u_{i'} \}_{i'=1}^{N_{\text{old}}}, \{ v_{j'} \}_{j'=1}^{N_q} \right) = -\sum_{i=1}^{N_{\text{old}}} \sum_{j=1}^{N_q} O_{ij} \left( H_{ij} \log \sigma\left( u_i^\top v_j \right) + \left( 1 - H_{ij} \right)\log\left( 1 - \sigma\left( u_i^\top v_j \right) \right)\right),
\end{align}
We implement using PyTorch's binary cross-entropy function for numerical stability. In practice, $\{ u_{i'} \}_{i'=1}^{N_{\text{old}}}$ and $\{ v_{j'} \}_{j'=1}^{N_q}$ are fit using AdamW with weight decay $\lambda$ for L2 regularization, with $(\lambda, k)$ chosen via cross-validation (Section \ref{section:experimental-setup}).

\subsection{Formulas for Factor Model Online Laplace Updates, Variance Estimators, and Hybrid Sampling Policy Components}
\label{appendix:full-formulas}

\subsubsection{Factor Model Online Laplace Updates}
\label{appendix:factor-model-online-laplace-updates}

At round $t$, after querying question $I_t$ and observing $z_{I_t}$, we update the Gaussian approximation $u \mid \F_t \approx \mathcal{N}(\uhat^{(t)}, \Sigmahat^{(t)})$ via the following online Laplace update (derivation in Appendix \ref{appendix:log-reg-derivation}), denoting these updates as $\Sigmahat^{(t)}, \uhat^{(t)} \leftarrow \texttt{ModelFactorUpdate}(z_{I_t}, v_{I_t}, \uhat^{(t-1)}, \Sigmahat^{(t-1)})$: 
{
\begin{align}
    \label{eq:bayesian-factor-model-update-1}
    \phat^{(t-1)}_{I_t} := \sigma\left( \uhat^{(t-1)}{}^\top v_{I_t} \right), \hat{w} := \phat^{(t-1)}_{I_t} (1 - \phat^{(t-1)}_{I_t}),\\
    \label{eq:bayesian-factor-model-update-2}
    \Sigmahat^{(t)} := \Sigmahat^{(t-1)} - \frac{\hat{w} \Sigmahat^{(t-1)} v_{I_t}v_{I_t}^\top \Sigmahat^{(t-1)}}{1 + \hat{w} v_{I_t}^\top \Sigmahat^{(t-1)} v_{I_t}}, 
    \uhat^{(t)} := \uhat^{(t-1)} + \Sigmahat^{(t)} \left( z_{I_t} - \phat^{(t-1)}_{I_t} \right) v_{I_t}.
\end{align}}

\subsubsection{Computable Closed-Form Variance Estimator for Pro-Active Inference}
\label{appendix:computable-closed-form-variance}
The estimator for the limiting variance of $\thetahat_{n_b}$ in Theorem \ref{thm:martingale-clt} is as follows (derived in Appendix \ref{appendix:proof-martingale-clt}):
\begin{equation}
    \label{eq:estimator-variance}
    \sigmahat^2_{n_b} := \frac{1}{n_b N_q^2} \sum_{t=1}^{n_b} \frac{\left( z_{I_t} - \phat_{I_t}^{(t-1)}\right)^2}{q_t(I_t)^2} - \frac{1}{n_b N_q^2} \sum_{t=1}^{n_b} \left( \frac{1}{n_b}\sum_{s=1}^{n_b} \frac{z_{I_s}}{q_s(I_s)} - \sum_{j=1}^{N_q} \phat_j^{(t-1)} \right)^2.
\end{equation}

\subsubsection{Computable Closed-Form Expression for Maximal Posterior-Variance-Reducing Index under Assumptions and Approximations in Section \ref{subsection:active-learning-sampling-policy}.}
\label{appendix:computable-closed-form-posterior-variance}

Under the assumptions and approximations in Section \ref{subsection:active-learning-sampling-policy}, the index $I_t$ that maximally reduces $\theta^*$'s posterior variance is given by $I_t = \argmax_j d^{(t)}(j)$ (derived in Appendix \ref{appendix:proof-theta-glm-variance-reduction}), where
\begin{align*}
        d^{(t)}(j) = \frac{\hat{w}^{(t-1)}_j \left(v_j^\top \Sigmahat^{(t-1)} \frac{1}{N_q} \sum_{j'=1}^{N_q} \hat{w}^{(t-1)}_{j'} v_{j'} \right)^2}{1 + \hat{w}^{(t-1)}_j v_j^\top \Sigmahat^{(t-1)} v_j},\\
        \hat{w}^{(t-1)}_j = \sigma\left( \uhat^{(t-1)}{}^\top v_j \right) \left( 1 - \sigma\left( \uhat^{(t-1)}{}^\top v_j \right) \right).
\end{align*}

\subsection{Derivation of Online Logistic Regression Laplace Approximation Update}
\label{appendix:log-reg-derivation}

\textit{This derivation was inspired by Ch. 4 of \cite{bishop2006pattern}'s discussion on Laplace approximations and Bayesian logistic regression. The key idea is to Taylor expand the log-posterior around the current mean, which yields approximate closed-form update formulas while preserving approximate Gaussian conjugacy.}

Suppose at time $t-1$ we have $u \sim \mathcal{N}(m, S)$. At time $t$, we queried $z$ with corresponding question factor $v$.\footnote{For notational brevity, we omit random indices $I_t$, superscripts, and subscripts. The derivations are left exactly unchanged.} Because under our factor model, $z \sim \text{Bern}\left( \sigma\left( u^\top v \right) \right)$, by Bayes' Rule we have the following, where ``$\sim$" refers to equality up to an additive constant that does not depend on $u$, and $\sigma(x) := \frac{1}{1 + \exp(-x)} = \frac{\exp(x)}{1 + \exp(x)}$:
\begin{align*}
    p(u \mid z) \propto p(z \mid u) p(u) \propto \sigma\left( u^\top v \right)^{z} \left( 1 - \sigma\left( u^\top v \right) \right)^{1 - z} \exp\left( -\frac{1}{2}\left( u - m \right)^\top S^{-1} \left( u - m \right) \right)\\
    \implies \log p(u \mid z) \sim z \log\left( \sigma\left( u^\top v \right) \right) + \left( 1 - z \right) \log\left( 1 - \sigma\left( u^\top v \right) \right) -\frac{1}{2} \left( u - m \right)^\top S^{-1} \left( u - m \right)\\
    = z u^\top v - z \log\left( 1 + \exp\left( u^\top v \right) \right) - (1-z) \log\left( 1 + \exp(u^\top v) \right) -\frac{1}{2} \left( u - m \right)^\top S^{-1} \left( u - m \right)\\
    = z u^\top v - \log\left( 1 + \exp(u^\top v) \right) -\frac{1}{2}\left( u - m \right)^\top S^{-1} \left( u - m \right).
\end{align*}
To apply Laplace's approximation, we will require a second-order Taylor expansion of $\log p(u \mid z)$ about $u = m$. For now, we proceed by computing the gradient and Hessian of the exact log-posterior:
\begin{align*}
    \nabla_u \log p(u \mid z) = zv - \frac{\exp(u^\top v)}{1 + \exp(u^\top v)} v - S^{-1}(u - m) = zv - \sigma\left( u^\top v \right) v - S^{-1}(u - m),
\end{align*}
and, noting that $\nabla_x \sigma(x) = \sigma(x) (1 - \sigma(x))$, we have
\begin{align*}
    \nabla^2_u \log p(u \mid z) = -S^{-1} - \sigma\left( u^\top v\right)\left( 1 - \sigma\left( u^\top v\right) \right) vv^\top.
\end{align*}
Assembling our second-order Taylor expansion and defining $\phat := \sigma\left( u^\top v \right)$, let 
\begin{align*}
    b = \nabla_u \log p(u \mid z) \mid_{u = m} = zv - \sigma\left( u^\top v \right) v := \left( z - \phat \right) v,
\end{align*}
and 
\begin{align*}
    A = \nabla^2_u \log p(u \mid z) \mid_{u = m} = -S^{-1} - \phat \left( 1 - \phat \right) vv^\top.
\end{align*} 
Then, we have
\begin{align*}
    \log p(u \mid z) \approx \log p(m \mid z) + b^\top \left( u - m \right) + \frac{1}{2} \left( u - m \right)^\top A \left( u - m \right)\\
    \sim b^\top u + \frac{1}{2} u^\top A u - u^\top A m = \frac{1}{2} \left( u^\top A u - 2\left( A m - b \right)^\top u \right).
\end{align*}
After completing the square, we obtain the following 
\begin{align*}
    \log p(u \mid z) \sim \frac{1}{2} \left( u^\top A u - 2\left( A m - b \right)^\top u \right)\\ \sim -\frac{1}{2}\left( \left( u - A^{-1} \left( Am - b \right) \right)^\top \left( -A \right) \left( u - A^{-1} \left( Am - b \right) \right) \right)\\
    = -\frac{1}{2}\left( \left( u - \left( m - A^{-1} b \right) \right)^\top \left( -A \right) \left( u - \left( m - A^{-1} b \right) \right)  \right).
\end{align*}
Pattern-matching to the multivariate normal distribution, we deduce that
\begin{align*}
    u \mid z \overset{\cdot}{\sim} \mathcal{N}\left( m - A^{-1} b, -A^{-1} \right)\\ \eqD \mathcal{N}\left( m + \left( S^{-1} + \phat\left( 1 - \phat \right) vv^\top \right)^{-1} \left( z - \phat \right) v, \left( S^{-1} + \phat\left( 1 - \phat \right) vv^\top \right)^{-1} \right).
\end{align*}
For compute-efficient updates, the Sherman-Morrison formula tells us that
\begin{align*}
    \left( S^{-1} + \phat\left( 1 - \phat \right) vv^\top \right)^{-1} = S - \frac{\phat (1 - \phat) S vv^\top S}{1 + \phat (1 - \phat) v^\top S v}.
\end{align*}
Putting everything together,
\begin{align*}
    u \mid z \overset{\cdot}{\sim} \mathcal{N}\left( m + \left( S - \frac{\phat (1 - \phat) S vv^\top S}{1 + \phat (1 - \phat) v^\top S v} \right) \left( z - \phat \right) v,  S - \frac{\phat (1 - \phat) S vv^\top S}{1 + \phat (1 - \phat) v^\top S v} \right).
\end{align*}

\newpage

\subsection{Proof of Theorem \ref{thm:martingale-clt}}
\label{appendix:proof-martingale-clt}

To formalize asymptotics, we work under a triangular-array setup indexed by $m$. Specifically, we consider a sequence of problems $\{ \mathcal{P}^{(m)} \}_{m \geq 1}$, in which the bank size $N_q = N_q^{(m)}$, the budget $n_b = n_b^{(m)}$, and all involved quantities--$\{ z_{1}, \dots, z_{N_q} \}$, $\theta$, $\thetahat_{n_b}$, $\{ \phat^{(t)} \}_{0 \leq t < n_b}$, and $\{ q_t(\cdot) \}_{1 \leq t \leq n_b}$---may depend on $m$. All asymptotic statements are in the limit as $m \rightarrow \infty$ with $n_b^{(m)} \uparrow \infty$. Henceforth, we suppress the superscript $(m)$ for readability.

\subsubsection{Definitions}
Recall that our estimand is $\theta = \frac{1}{N_q} \sum_{j=1}^{N_q} z_j$ and $\thetahat_{n_b} = \frac{1}{n_b} \sum_{t=1}^{n_b} \phi_t$, where
\begin{align*}
    \phi_t = \frac{1}{N_q} \sum_{j=1}^{N_q} \phat_j^{(t-1)} + \frac{1}{N_q} \frac{z_{I_t} - \phat_{I_t}^{(t-1)}}{q_t(I_t)}.
\end{align*}
Let us define $\Delta_t := \phi_t - \frac{1}{N_q} \sum_{j=1}^{N_q} z_j = \phi_t - \theta$ and define the filtration $\F_t = \sigma(I_1, z_{I_1}, \dots, I_t, z_{I_t})$.

\subsubsection{Technical Assumptions/Conditions}
We enumerate the technical assumptions/conditions required to prove the asymptotic normality stated in Theorem~\ref{thm:martingale-clt}.

\begin{assumption}[Variance stabilization]\label{ass:var-stabilize}
The average conditional variance converges in probability to a positive deterministic limit $\sigma^2$ (which does not depend on $(m)$):
\begin{align*}
    \frac{1}{{n_b}} \sum_{t=1}^{n_b} \Var\left( \Delta_t \mid \F_{t-1} \right) \toP \sigma^2.
\end{align*}
\end{assumption}

\begin{assumption}[Lindeberg condition]\label{ass:lindeberg}
    The triangular array of problem instances in combination with sampling policies obey the Lindeberg condition. Specifically, for every $\epsilon > 0$,
    \begin{align*}
    \frac{1}{{n_b}} \sum_{t=1}^{n_b} \mathbb{E}[\Delta_t^2 \bdone(|\Delta_t| > \epsilon \sqrt{{n_b}}) \mid \F_{t-1}] \toP 0.
    \end{align*}
\end{assumption}

\begin{assumption}[Conditional variance control]\label{ass:pred-prob}
    The average conditional variance of the inverse-propensity-weighted predicted probabilities satisfies \[\frac{1}{n_b}\sum_{t=1}^{n_b}\Var\left(\frac{1}{N_q}\cdot\frac{\hat{p}^{(t-1)}_{I_t}}{q_t(I_t)} \mid \F_{t-1}\right) = o_p(n_b).\]
\end{assumption}
Assumptions~\ref{ass:var-stabilize}--\ref{ass:lindeberg} are the standard conditions under which a CLT for martingale difference arrays holds (c.f.~\citet[Corollary 3.2]{hall2014martingale}). Assumption~\ref{ass:pred-prob} is mild and is used to facilitate consistency of the variance estimator. Intuitively, in our setting of LLM evaluation with our proposed hybrid sampling policy (c.f. Eq.~\ref{eq:final-hybrid-sampling-policy}), the variance stabilization assumption is reasonable because as more outcome labels are observed, the model-factor estimate $(\uhat^{(t)}, \Sigmahat^{(t)})$ stabilizes, which stabilizes $\phat^{(t)}$, and hence $q_t(\cdot)$, too (because $(\alpha_t, \beta_t)$ also stabilize). In addition, the Lindeberg condition is satisfied, by construction, under our hybrid sampling policy because $\tau$-mixing lower-bounds $q_t(j)$, avoiding extreme inverse-probability weights, and since $z_j, \phat^{(t)}_j \in [0, 1]$, the increments $\Delta_t$ are uniformly bounded (by a constant depending only on $\tau$). The $\tau$-mixing also ensures the satisfaction of Assumption~\ref{ass:pred-prob}.

\subsubsection{Asymptotic Normality via Martingale CLT}
By substitution, we begin with the following:
\begin{equation*}
        \sqrt{{n_b}}\left( \thetahat_{n_b} - \theta \right) = \sqrt{{n_b}}\left( \frac{1}{{n_b}}\sum_{t=1}^{n_b} \phi_t - \frac{1}{N_q} \sum_{j=1}^{N_q} z_j \right).
\end{equation*}
Simplifying, we have
\begin{align*}
    \sqrt{{n_b}}\left( \thetahat_{n_b} - \theta \right) = \frac{1}{\sqrt{{n_b}}} \sum_{t=1}^{n_b} \phi_t - \frac{\sqrt{{n_b}}}{N_q} \sum_{j=1}^{N_q} z_j = \frac{1}{\sqrt{{n_b}}} \sum_{t=1}^{n_b} \phi_t - \frac{1}{\sqrt{{n_b}} N_q} \sum_{t=1}^{n_b} \sum_{j=1}^{N_q} z_j\\ = \sum_{t=1}^{n_b} \frac{1}{\sqrt{{n_b}}} \left( \phi_t - \frac{1}{N_q}\sum_{j=1}^{N_q} z_j \right) = \sum_{t=1}^{n_b} \frac{1}{\sqrt{n_b}} \Delta_t.
\end{align*}
Next, we will show that the $\Delta_t$ are martingale differences---i.e., that their cumulative sums form a martingale.

\begin{proposition}[Martingale differences and unbiasedness]
\label{prop:martingale} Define $M_t$ as follows:
\begin{equation}
M_t := \sum_{s=1}^t \left( \phi_s - \frac{1}{N_q}\sum_{j=1}^{N_q} z_j \right) := \sum_{s=1}^t \Delta_s.
\end{equation}
Then, $(M_t, \F_t)$ is a martingale. In particular, $\thetahat_{n_b}$ is unbiased for $\theta$ for every finite $n_b$.
\begin{proof}
    Substituting, we have
    \begin{align*}
        \mathbb{E}[\Delta_t \mid \F_{t-1}] &=  \mathbb{E}\left[ \phi_t - \frac{1}{N_q} \sum_{j=1}^{N_q} z_j \mid \F_{t-1} \right]\\ &= \frac{1}{N_q}  \mathbb{E}\left[ \sum_{j=1}^{N_q} \phat_j^{(t-1)} + \frac{z_{I_t} - \phat_{I_t}^{(t-1)}}{q_t(I_t)} - \sum_{j=1}^{N_q} z_j \mid \F_{t-1} \right]\\
        &= \frac{1}{N_q} \mathbb{E}\left[ \sum_{j=1}^{N_q} \left( \phat_j^{(t-1)} - z_j \right) + \frac{z_{I_t} - \phat_{I_t}^{(t-1)}}{q_t(I_t)} \mid \F_{t-1} \right]\\ &= \frac{1}{N_q}  \sum_{j=1}^{N_q} \left( \phat_j^{(t-1)} - z_j \right) + \frac{1}{N_q} \mathbb{E}\left[ \frac{z_{I_t} - \phat_{I_t}^{(t-1)}}{q_t(I_t)} \mid \F_{t-1} \right]\\ &= \frac{1}{N_q} \sum_{j=1}^{N_q} \left( \phat_j^{(t-1)} - z_j \right) + \frac{1}{N_q} \sum_{j=1}^{N_q} \frac{z_j - \phat_j^{(t-1)}}{q_t(j)} P(I_t = j \mid \F_{t-1})\\ &= \frac{1}{N_q} \sum_{j=1}^{N_q} \left( \phat_j^{(t-1)} - z_j \right) + \frac{1}{N_q} \sum_{j=1}^{N_q} \frac{z_j - \phat_j^{(t-1)}}{q_t(j)} q_t(j) = 0,
    \end{align*}
    thus confirming that we have a martingale. As a corollary, by summing and taking expectations, we find that $\mathbb{E}[\thetahat_{n_b}] = \theta$: i.e., that $\thetahat_{n_b}$ is unbiased for $\theta$ at any finite ${n_b}$.
\end{proof}
\end{proposition}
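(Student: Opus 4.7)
The plan is to verify directly that $(M_t, \F_t)$ satisfies the three defining properties of a martingale: adaptedness, integrability, and the conditional expectation identity $\E[M_t \mid \F_{t-1}] = M_{t-1}$. Unbiasedness of $\thetahat_{n_b}$ is then an immediate corollary obtained by taking the unconditional expectation of $M_{n_b} = n_b(\thetahat_{n_b} - \theta)$ and using $\E[M_{n_b}] = 0$. Adaptedness follows from the measurability conditions built into the procedure: $\phat_j^{(t-1)}$ and $q_t(\cdot)$ are $\F_{t-1}$-measurable by construction, and $(I_t, z_{I_t})$ is $\F_t$-measurable by the very definition of $\F_t$; hence each $\Delta_s$ is $\F_s$-measurable and $M_t$ is $\F_t$-measurable. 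Integrability is trivial because $z_j \in \{0,1\}$, $\phat_j^{(t-1)} \in [0,1]$, and $q_t(I_t) \geq \tau/N_q > 0$ (guaranteed by the $\tau$-mixing lower bound in the hybrid policy of Eq.~\ref{eq:final-hybrid-sampling-policy}), which makes each $\Delta_s$ uniformly bounded.

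\textbf{The key computation.} The whole substance reduces to showing $\E[\Delta_t \mid \F_{t-1}] = 0$. I would decompose $\Delta_t = \phi_t - \theta$ into pieces: the plug-in term $\frac{1}{N_q}\sum_j \phat_j^{(t-1)}$ is $\F_{t-1}$-measurable and pulls out of the conditional expectation, while $\theta = \frac{1}{N_q}\sum_j z_j$ is deterministic (the $z_j$'s are treated as fixed within each problem instance). All the randomness lives in $(z_{I_t} - \phat_{I_t}^{(t-1)})/(N_q\, q_t(I_t))$. Since $I_t \sim q_t(\cdot)$ conditionally on $\F_{t-1}$, a Horvitz--Thompson--style inverse-propensity cancellation gives $\E[(z_{I_t}-\phat_{I_t}^{(t-1)})/(N_q\, q_t(I_t)) \mid \F_{t-1}] = \frac{1}{N_q}\sum_j(z_j - \phat_j^{(t-1)})$. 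Combining this with the plug-in term and subtracting $\theta$, all $z_j$'s and $\phat_j^{(t-1)}$'s cancel, leaving $\E[\Delta_t \mid \F_{t-1}] = 0$ as required.

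\textbf{Main obstacle.} Honestly, there is no serious obstacle: the argument is essentially a one-line inverse-propensity cancellation, and boundedness disposes of integrability. The only real subtlety worth making explicit is the measurability hygiene---$\phat_j^{(t-1)}$ and $q_t(\cdot)$ must depend only on $(I_1, z_{I_1}, \dots, I_{t-1}, z_{I_{t-1}})$---which is exactly the condition imposed on the sampling policy and factor-model updates in Section~3.2 (``we require $\phat_j^{(t-1)}$ and $q_t(\cdot)$ to be $\F_{t-1}$-measurable''). Once that condition is pointed to, the martingale structure and hence unbiasedness follow immediately, giving the result at every finite $n_b$.
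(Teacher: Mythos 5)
Your proposal is correct and follows essentially the same route as the paper: decompose $\Delta_t$, pull out the $\F_{t-1}$-measurable plug-in term and the fixed $\theta$, and apply the inverse-propensity cancellation $\E\bigl[(z_{I_t}-\phat_{I_t}^{(t-1)})/q_t(I_t)\mid\F_{t-1}\bigr]=\sum_j (z_j-\phat_j^{(t-1)})$. Your explicit remarks on adaptedness and integrability (via the $\tau$-mixing lower bound on $q_t$) are a small addition of rigor the paper leaves implicit, but not a different argument.
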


\newpage

Given that we have a martingale, we now proceed to obtain the asymptotic variance of $\thetahat_{n_b}$. To do, so, we need to first compute the conditional variances of our martingale increments. Because the $z_j$ are treated as fixed, by substitution we have
\begin{align*}
    \Var\left( \Delta_t \mid \F_{t-1} \right)\\ = \frac{1}{N_q^2} \Var\left( \sum_{j=1}^{N_q} \phat_j^{(t-1)} + \frac{z_{I_t} - \phat_{I_t}^{(t-1)}}{q_t(I_t)} - \sum_{j=1}^{N_q} z_j \mid \F_{t-1} \right) = \frac{1}{N_q^2} \Var\left( \frac{z_{I_t} - \phat_{I_t}^{(t-1)}}{q_t(I_t)} \mid \F_{t-1} \right)\\
    = \frac{1}{N_q^2} \mathbb{E}\left[ \left( \frac{z_{I_t} - \phat_{I_t}^{(t-1)}}{q_t(I_t)} \right)^2 \mid \F_{t-1} \right] - \frac{1}{N_q^2} \mathbb{E}\left[ \frac{z_{I_t} - \phat_{I_t}^{(t-1)}}{q_t(I_t)} \mid \F_{t-1} \right]^2 \\ = \frac{1}{N_q^2} \sum_{j=1}^{N_q} \left( \frac{z_j - \phat_j^{(t-1)}}{q_t(j)} \right)^2 P(I_t = j \mid \F_{t-1}) - \frac{1}{N_q^2} \left( \sum_{j=1}^{N_q} \frac{z_j - \phat_j^{(t-1)}}{q_t(j)} P(I_t = j) \right)^2\\ = \frac{1}{N_q^2} \left( \sum_{j=1}^{N_q} \frac{\left( z_j - \phat_j^{(t-1)}\right)^2}{q_t(j)} - \left( \sum_{j=1}^{N_q} \left( z_j - \phat_j^{(t-1)} \right) \right)^2 \right).
\end{align*}
Under Assumptions \ref{ass:var-stabilize}--\ref{ass:lindeberg}, the martingale central limit theorem (c.f., ~\citet[Corollary 3.2]{hall2014martingale}) implies
\begin{equation}
    \label{eq:final-martingale}
    \sqrt{{n_b}}\left( \thetahat_{n_b} - \theta \right) \toD \mathcal{N}\left( 0, \sigma^2 \right).
\end{equation}
Lemma~\ref{lemma:consistent-var} below shows that $\sigmahat_{n_b}^2 \overset{p}{\rightarrow} \sigma^2$, and thus the result follows by Slutsky's theorem.

\begin{lemma}\label{lemma:consistent-var}
    Define \begin{equation}
    \sigmahat_{n_b}^2 = \frac{1}{{n_b} N_q^2} \sum_{t=1}^{n_b} \frac{\left( z_{I_t} - \phat_{I_t}^{(t-1)}\right)^2}{q_t(I_t)^2} - \frac{1}{{n_b} N_q^2} \sum_{t=1}^{n_b} \left( \frac{1}{{n_b}}\sum_{s=1}^{n_b} \frac{z_{I_s}}{q_s(I_s)} - \sum_{j=1}^{N_q} \phat_j^{(t-1)} \right)^2.
\end{equation}

Then $\sigmahat_{n_b}^2 \overset{p}{\rightarrow} \sigma^2$.
\end{lemma}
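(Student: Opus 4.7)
The plan is to show $\sigmahat_{n_b}^2 - \bar{V}_{n_b} \toP 0$, where $\bar{V}_{n_b} := \frac{1}{n_b}\sum_{t=1}^{n_b}\Var(\Delta_t \mid \F_{t-1})$; combined with Assumption~\ref{ass:var-stabilize} and Slutsky's theorem, this yields $\sigmahat_{n_b}^2 \toP \sigma^2$. Using the closed-form expression for $\Var(\Delta_t \mid \F_{t-1})$ derived just before \eqref{eq:final-martingale}, I would decompose both sides into the same two-piece structure. Write $\bar{V}_{n_b} = V_1 - V_2$ and $\sigmahat_{n_b}^2 = T_1 - T_2$, where the ``first'' pieces involve $\sum_j (z_j - \phat_j^{(t-1)})^2/q_t(j)$ (oracle) vs.\ $(z_{I_t}-\phat_{I_t}^{(t-1)})^2/q_t(I_t)^2$ (plug-in), and the ``second'' pieces involve $(N_q\theta - S_t)^2$ (oracle) vs.\ $(\bar{Z}_{n_b} - S_t)^2$ (plug-in), with $S_t := \sum_j \phat_j^{(t-1)}$ and $\bar{Z}_{n_b} := \frac{1}{n_b}\sum_s z_{I_s}/q_s(I_s)$. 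I would then handle $T_1 - V_1$ and $T_2 - V_2$ separately.

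For $T_1 - V_1$: by construction $V_1$ is the running $\F_{t-1}$-conditional expectation of the summands in $T_1$, so $T_1 - V_1$ is an average of $\F_t$-martingale differences. The $\tau$-mixing lower bound $q_t(j) \geq \tau/N_q$ from \eqref{eq:final-hybrid-sampling-policy} makes each summand uniformly bounded by a constant depending only on $\tau$. A standard martingale $L^2$ / Chebyshev bound then gives $\Var(T_1 - V_1) = O(1/n_b) \to 0$, hence $T_1 - V_1 \toP 0$.

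For $T_2 - V_2$: use the identity $a^2 - b^2 = (a-b)(a+b)$ with $a = \bar{Z}_{n_b} - S_t$, $b = N_q\theta - S_t$ to factor $T_2 - V_2 = \frac{\bar{Z}_{n_b} - N_q\theta}{N_q^2} \cdot \frac{1}{n_b}\sum_t (\bar{Z}_{n_b} + N_q\theta - 2 S_t)$. The second factor is $O_p(N_q)$ since $\bar{Z}_{n_b}/N_q$, $\theta$, and $S_t/N_q$ are all uniformly bounded (by $1/\tau$, $1$, and $1$ respectively). For the first factor, write $\bar{Z}_{n_b}/N_q - \theta$ as an average of martingale differences $z_{I_s}/(N_q q_s(I_s)) - \theta$, which have $\F_{s-1}$-conditional mean zero and are uniformly bounded by $1/\tau$; martingale Chebyshev then gives $(\bar{Z}_{n_b} - N_q\theta)/N_q = O_p(1/\sqrt{n_b})$. (Equivalently, one can split $z_{I_s} = (z_{I_s} - \phat_{I_s}^{(s-1)}) + \phat_{I_s}^{(s-1)}$ and control the two pieces via Assumption~\ref{ass:var-stabilize} and Assumption~\ref{ass:pred-prob}, which is the ``model-free'' version of this step.) Multiplying the two factors yields $T_2 - V_2 = O_p(1/\sqrt{n_b}) = o_p(1)$.

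Combining gives $\sigmahat_{n_b}^2 - \bar{V}_{n_b} \toP 0$, and Assumption~\ref{ass:var-stabilize} closes the argument. The main obstacle is the $T_2 - V_2$ step: since the variance estimator cannot access $\sum_j z_j = N_q\theta$, it substitutes the noisy IPW average $\bar{Z}_{n_b}$, and one must carefully balance the $N_q^2$ normalization against the potential $O(N_q)$ scale of the summands $S_t$ and $\bar{Z}_{n_b}$. This is precisely the role of Assumption~\ref{ass:pred-prob}, which quantitatively controls the conditional variance of the inverse-propensity-weighted predicted probabilities and guarantees that $\bar{Z}_{n_b}$ concentrates around $N_q\theta$ at a rate faster than the $N_q$ slack.
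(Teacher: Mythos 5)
Your overall strategy coincides with the paper's: you use the same two-piece decomposition (compare $\hat{A}_{n_b}$ with $A$ and $\hat{B}_{n_b}$ with $B$, where $A-B$ is the average conditional variance), and your parenthetical ``model-free'' treatment of the second piece---splitting $z_{I_s}=(z_{I_s}-\phat^{(s-1)}_{I_s})+\phat^{(s-1)}_{I_s}$, using $\thetahat_{n_b}\toP\theta$ and Assumption~\ref{ass:pred-prob}---is essentially the paper's argument for $\hat{B}_{n_b}-B=o_p(1)$. Your observation that $A$ is the $\F_{t-1}$-compensator of the summands of $\hat{A}_{n_b}$ is also correct.

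The gap is in how you dispose of $T_1-V_1$ (and in the primary version of your $T_2-V_2$ step): you invoke the uniform lower bound $q_t(j)\geq \tau/N_q$ to make the increments bounded by a constant depending only on $\tau$, and then apply a bounded-increment martingale $L^2$/Chebyshev bound. That lower bound is a property of the specific hybrid policy in Eq.~\eqref{eq:final-hybrid-sampling-policy}; it is \emph{not} among the hypotheses of Lemma~\ref{lemma:consistent-var} or Theorem~\ref{thm:martingale-clt}, which are stated for arbitrary $\F_{t-1}$-measurable policies satisfying only Assumptions~\ref{ass:var-stabilize}--\ref{ass:pred-prob} (the $\tau$-mixing is discussed in the paper only as a reason those assumptions are plausible for FAQ). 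Without boundedness, the summands $(z_{I_t}-\phat^{(t-1)}_{I_t})^2/(N_q^2 q_t(I_t)^2)$ can be arbitrarily large, and your Chebyshev step would require control of their conditional second moments, i.e.\ conditional \emph{fourth} moments of the increments $\Delta_t$---something Assumptions~\ref{ass:var-stabilize}--\ref{ass:pred-prob} do not supply (Assumption~\ref{ass:var-stabilize} stabilizes only conditional variances). The paper avoids this by instead invoking the degenerate convergence theorem of Hall and Heyde (Theorem 2.23), which holds under the same conditions as the martingale CLT, and then removing the cross terms via Lenglart's inequality together with $\theta,\tilde{\theta}^{(t-1)}\in[0,1]$ and Assumptions~\ref{ass:var-stabilize} and~\ref{ass:pred-prob}. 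So your write-up is a correct and arguably more elementary proof \emph{for FAQ's actual tempered-and-mixed policy}, but it does not prove the lemma at its stated level of generality; to repair it you would either add $q_t(j)\geq \tau/N_q$ as an explicit hypothesis or replace the bounded-increment Chebyshev step with a Hall--Heyde/Lenglart-type argument as in the paper.
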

\begin{proof}
    Define \[\hat{A}_{n_b} := \frac{1}{{n_b} N_q^2} \sum_{t=1}^{n_b} \frac{\left( z_{I_t} - \phat_{I_t}^{(t-1)}\right)^2}{q_t(I_t)^2}, \hat{B}_{n_b} :=\frac{1}{{n_b} N_q^2} \sum_{t=1}^{n_b} \left( \frac{1}{{n_b}}\sum_{s=1}^{n_b} \frac{z_{I_s}}{q_s(I_s)} - \sum_{j=1}^{N_q} \phat_j^{(t-1)} \right)^2\] as well as \[A := \frac{1}{n_b}\sum_{t=1}^{n_b}\frac{1}{N_q^2}  \sum_{j=1}^{N_q} \frac{\left( z_j - \phat_j^{(t-1)}\right)^2}{q_t(j)}, \; B := \frac{1}{n_b}\sum_{t=1}^{n_b}\frac{1}{N_q^2}\left( \sum_{j=1}^{N_q} \left( z_j - \phat_j^{(t-1)} \right) \right)^2\]

    To show that $\sigmahat_{n_b}^2 \overset{p}{\rightarrow} \sigma^2$, it suffices to show that (a) $\hat{A}_{n_b} = A + o_p(1)$ and (b) $\hat{B}_{n_b} = B+ o_p(1)$. We show these below:

    \begin{enumerate}[label=(\alph*)]
        \item Define $\tilde{\theta}^{(t-1)} := \frac{1}{N_q}\sum_{j=1}^{N_q}\hat{p}^{(t-1)}_j$. Under Assumptions~\ref{ass:var-stabilize}--\ref{ass:lindeberg}, \citet[][Theorem 2.23]{hall2014martingale} guarantees that \begin{align*}
        o_p(1) &= \label{eq:hall-heyde}
            \frac{1}{n_b}\sum_{t=1}^{n_b}\bigg[\left(\frac{1}{N_q} \frac{z_{I_t}-\hat{p}^{(t-1)}_{I_t}}{q_t(I_t)}+\tilde{\theta}^{(t-1)}-\theta\right)^2 \\ &-\left(\frac{1}{N_q^2}\sum_{j=1}^{N_q}\frac{\left(z_{j}-\hat{p}^{(t-1)}_j\right)^2}{q_t(j)} - (\tilde{\theta}^{(t-1)}-\theta)^2\right)\bigg]\\
            &= \hat{A}_{n_b} - A + \frac{1}{n_b}\sum_{t=1}^{n_b}\left[2(\tilde{\theta}^{(t-1)}-\theta)^2 - 2(\theta-\tilde{\theta}^{(t-1)})\left(\frac{1}{N_q} \frac{z_{I_t}-\hat{p}^{(t-1)}_{I_t}}{q_t(I_t)}\right)\right].
        \end{align*}

        To show that $\hat{A}_{n_b} - A = o_p(1)$ it suffices to show that \begin{equation}\label{ineqA}\frac{1}{n_b}\sum_{t=1}^{n_b}\left(\theta-\tilde{\theta}^{(t-1)}\right)\left(\theta-\tilde{\theta}^{(t-1)} -\frac{1}{N_q} \frac{z_{I_t}-\hat{p}^{(t-1)}_{I_t}}{q_t(I_t)}\right) = o_p(1).\end{equation}

        To show \eqref{ineqA}, we apply the Lenglart inequality (c.f., e.g., \citet[Lemma 3.7]{whitt2007proofs} or \citet[pp.~45]{jacod2012discretization}) to obtain 
        \begin{multline*}P\left(\left|\frac{1}{n_b}\sum_{t=1}^{n_b}\left(\theta-\tilde{\theta}^{(t-1)}\right)\left(\theta-\tilde{\theta}^{(t-1)} -\frac{1}{N_q} \frac{z_{I_t}-\hat{p}^{(t-1)}_{I_t}}{q_t(I_t)}\right)\right|  \geq \epsilon\right) \leq\\ \frac{\eta}{\epsilon} + P\left(\frac{1}{n_b^2}\sum_{t=1}^{n_b}\mathbb{E}\left[\left(\theta-\tilde{\theta}^{(t-1)}\right)^2\left(\theta-\tilde{\theta}^{(t-1)} -\frac{1}{N_q} \frac{z_{I_t}-\hat{p}^{(t-1)}_{I_t}}{q_t(I_t)}\right)^2\mid \F_{t-1}\right] \geq \eta\right)\end{multline*}
        for all $\epsilon,\eta > 0$. Hence, it suffices to show that $$\frac{1}{n_b^2}\sum_{t=1}^{n_b}\mathbb{E}\left[\left(\theta-\tilde{\theta}^{(t-1)}\right)^2\left(\theta-\tilde{\theta}^{(t-1)} -\frac{1}{N_q} \frac{z_{I_t}-\hat{p}^{(t-1)}_{I_t}}{q_t(I_t)}\right)^2\mid \F_{t-1}\right] = o_p(1),$$ which is true by virtue of the fact that $(\theta-\tilde{\theta}^{(t-1)})^2 \leq 1$ in combination with Assumption~\ref{ass:var-stabilize}.

        \item Expanding both squares, we find that \begin{align*}\hat{B}_{n_b} - B &= \left(\frac{1}{n_b}\sum_{s=1}^{n_b}\frac{1}{N_q}\cdot\frac{z_{I_s}}{q_s(I_s)}\right)^2-\theta^2 -2\tilde{\theta}^{(t-1)}\left[\frac{1}{n_b}\sum_{s=1}^{n_b}\frac{1}{N_q}\cdot\frac{z_{I_s}}{q_s(I_s)}-\theta\right]\\
        &= \left(\frac{1}{n_b}\sum_{s=1}^{n_b}\frac{1}{N_q}\cdot\frac{z_{I_s}}{q_s(I_s)}-\theta\right)^2 + 2\theta\left(\frac{1}{n_b}\sum_{s=1}^{n_b}\frac{1}{N_q}\cdot\frac{z_{I_s}}{q_s(I_s)}-\theta\right)\\ &-2\tilde{\theta}^{(t-1)}\left[\frac{1}{n_b}\sum_{s=1}^{n_b}\frac{1}{N_q}\cdot\frac{z_{I_s}}{q_s(I_s)}-\theta\right].\end{align*} 
        As $\theta, \tilde{\theta}^{(t-1)} \in [0,1]$, it suffices to show that $\frac{1}{n_b}\sum_{s=1}^{n_b}\frac{1}{N_q}\cdot\frac{z_{I_s}}{q_s(I_s)}-\theta\overset{p}{\rightarrow} 0.$ Since $\hat{\theta}_{n_b}-\theta\overset{p}{\rightarrow}0$ it further suffices to show that $\frac{1}{n_b}\sum_{t=1}^{n_b}\frac{1}{N_q}\frac{\hat{p}^{(t-1)}_{I_t}}{q_t(I_t)}-\frac{1}{N_q}\sum_{j=1}^{N_q}\hat{p}^{(t-1)}_j = o_p(1)$. Following the same argument as in the previous item, this is implied by \[\frac{1}{n_b^2}\sum_{t=1}^{n_b}\Var\left(\frac{1}{N_q}\cdot\frac{\hat{p}^{(t-1)}_{I_t}}{q_t(I_t)} \mid \F_{t-1}\right) \overset{p}{\rightarrow} 0,\] which is precisely Assumption~\ref{ass:pred-prob}.

    \end{enumerate}
\end{proof}

\subsection{Proof of Theorem \ref{thm:oracle-min-var}}
\label{appendix:proof-oracle-min-var}

Replacing $\phat_j^{(t-1)}$ with our non-time-varying oracle $p_j$'s, our estimator after $n_b$ sampling rounds becomes
\begin{align*}
    \thetahat_{{n_b},\text{oracle}} = \frac{1}{{n_b}} \sum_{t=1}^{n_b} \left( \theta^* + \frac{1}{N_q}\frac{z_{I_t} - p_{I_t}}{q_t(I_t)} \right) = \theta^* + \frac{1}{{n_b} N_q} \sum_{t=1}^{n_b} \frac{z_{I_t} - p_{I_t}}{q_t(I_t)}.
\end{align*}
Because $\theta^*$ is deterministic, to minimize the variance of $\thetahat_{{n_b},\text{oracle}}$, it suffices to minimize the variance of $M_{n_b} = \sum_{t=1}^{n_b} \frac{z_{I_t} - p_{I_t}}{q_t(I_t)}$. Define the filtration $\F_t = \sigma(I_1, z_{I_1}, \dots, I_t, z_{I_t})$. Writing out the variance, we have the following, noting that because $\thetahat_{{n_b},\text{oracle}}$ is unbiased for $\theta$, it must follow that $\mathbb{E}[M_{n_b}] = 0$ and thus
\begin{align*}
    \Var(M_{n_b}) = \mathbb{E}\left[ M_{n_b}^2 \right] - \mathbb{E}\left[ M_{n_b} \right]^2 = \mathbb{E}[M_{n_b}^2] = \mathbb{E}\left[ \left( \sum_{s=1}^{n_b} \frac{z_{I_s} - p_{I_s}}{q_s(I_s)} \right) \left( \sum_{t=1}^{n_b} \frac{z_{I_t} - p_{I_t}}{q_t(I_t)} \right) \right]\\
    = \mathbb{E}\left[ \sum_{s=1}^{n_b} \left( \frac{z_{I_s} - p_{I_s}}{q_s(I_s)} \right)^2 + 2 \sum_{1 \leq s < t \leq {n_b}} \left( \frac{z_{I_s} - p_{I_s}}{q_s(I_s)} \right) \left( \frac{z_{I_t} - p_{I_t}}{q_t(I_t)} \right) \right].
\end{align*}
Now, for any $s < t$, we deduce the following, because $\frac{z_{I_s} - p_{I_s}}{q_s(I_s)}$ and $q_t(\cdot)$ are $\F_{t-1}$-measurable:
\begin{align*}
    &\mathbb{E}\left[ \left( \frac{z_{I_s} - p_{I_s}}{q_s(I_s)} \right) \left( \frac{z_{I_t} - p_{I_t}}{q_t(I_t)} \right) \right]\\ &= \mathbb{E}\left[ \mathbb{E}\left[ \left( \frac{z_{I_s} - p_{I_s}}{q_s(I_s)} \right) \left( \frac{z_{I_t} - p_{I_t}}{q_t(I_t)} \right) \mid \F_{t-1} \right] \right] \\
    &= \mathbb{E}\left[ \left( \frac{z_{I_s} - p_{I_s}}{q_s(I_s)} \right) \mathbb{E}\left[ \left( \frac{z_{I_t} - p_{I_t}}{q_t(I_t)} \right) \mid \F_{t-1} \right] \right]\\ 
    &= \mathbb{E}\left[ \left( \frac{z_{I_s} - p_{I_s}}{q_s(I_s)} \right) \sum_{j=1}^{N_q} \mathbb{E}\left[ \left( \frac{z_{I_t} - p_{I_t}}{q_t(I_t)}\right) \mid I_t = j, \F_{t-1} \right] P(I_t = j \mid \F_{t-1})\right]\\
    &= \mathbb{E}\left[ \left( \frac{z_{I_s} - p_{I_s}}{q_s(I_s)} \right) \sum_{j=1}^{N_q} \mathbb{E}\left[ \left( \frac{z_{j} - p_{j}}{q_t(j)}\right) \mid I_t = j, \F_{t-1} \right] q_t(j) \right]\\
    &= \mathbb{E}\left[ \left( \frac{z_{I_s} - p_{I_s}}{q_s(I_s)} \right) \sum_{j=1}^{N_q} \mathbb{E}\left[ z_j - p_j \mid I_t = j, \F_{t-1} \right] \right] = 0,
\end{align*}
because for all $j$, $\mathbb{E}\left[ z_j - p_j \mid I_t = j, \F_{t-1} \right] = \mathbb{E}\left[ z_j - p_j\right] = 0$ by our oracle setup assumptions. Thus, by linearity, conditional independences, and variance of the Bernoulli, we have the following, noting that $q_t(\cdot)$ is $\F_{t-1}$-measurable:
\begin{align*}
    \Var(M_{n_b}) &= \sum_{t=1}^{n_b} \mathbb{E}\left[ \left( \frac{z_{I_t} - p_{I_t}}{q_t(I_t)} \right)^2  \right] = \sum_{t=1}^{n_b} \mathbb{E}\left[ \mathbb{E}\left[ \left( \frac{z_{I_t} - p_{I_t}}{q_t(I_t)} \right)^2 \mid \F_{t-1}  \right] \right]\\
    &= \sum_{t=1}^{n_b} \mathbb{E}\left[ \sum_{j=1}^{N_q} \mathbb{E}\left[ \left( \frac{z_{I_t} - p_{I_t}}{q_t(I_t)} \right)^2 \mid I_t = j, \F_{t-1}  \right] P(I_t = j \mid \F_{t-1}) \right]\\
    &= \sum_{t=1}^{n_b} \mathbb{E}\left[ \sum_{j=1}^{N_q} \mathbb{E}\left[ \left( \frac{z_{j} - p_{j}}{q_s(j)} \right)^2 \mid I_t = j, \F_{t-1}  \right] q_t(j) \right]\\
    &= \sum_{t=1}^{n_b} \mathbb{E}\left[ \sum_{j=1}^{N_q}\frac{\mathbb{E}\left[ \left( z_j - p_j \right)^2 \mid I_t = j, \F_{t-1} \right]}{q_t(j)} \right]\\
    &= \sum_{t=1}^{n_b} \mathbb{E}\left[ \sum_{j=1}^{N_q} \frac{p_j (1-p_j)}{q_t(j)} \right].
\end{align*}
At this point, note that we are minimizing a sum over $t$ of quantities that are nonnegative, as the $p_j$ are nonnegative and the $q_t(\cdot)$ are strictly positive by definition. From our final expression, there is also no temporal dependency between $q_s(\cdot)$ and $q_t(\cdot)$ for $s < t$. Finally, once provided with our choice of candidate $q_t(\cdot)$, the quantities inside the expectations are now deterministic. Combined, these three points tell us that (a) minimizing $\Var(M_{n_b})$ entails minimizing the instantaneous variance contributions $\sum_{j=1}^{N_q} p_j (1- p_j) / q_t(j)$ at each time step $s$; and (b) this instantaneous variance contribution minimization is the same across all timesteps. Specifically, at each time step $t$, we want to solve the following minimization problem, which is strictly convex because $p_j (1-p_j)$ are all nonnegative, and the $q_t(j)$ are all strictly positive:
\begin{align*}
    \min_{q_t(\cdot)} \sum_{j=1}^{N_q} \frac{p_j(1 - p_j)}{q_t(j)} \quad \text{s.t.} \quad q_t(j) > 0 \; \forall j, \; \sum_{j=1}^{N_q} q_t(j) = 1.
\end{align*}
Treating $q_t(\cdot)$ as a vector and setting up the Lagrangian, we have
\begin{align*}
    L(q_t(\cdot), \lambda) = \sum_{j=1}^{N_q} \frac{p_j(1 - p_j)}{q_t(j)} + \lambda \left( \sum_{j=1}^{N_q} q_t(j) - 1 \right).
\end{align*}
Partial-differentiating to obtain first-order conditions, we have
\begin{align*}
    \frac{\partial L(q_t(\cdot), \lambda)}{\partial q_t(j)} = -\frac{p_j(1-p_j)}{q_t(j)^2} + \lambda = 0 \implies {q_t^*(j)}^2 = \frac{p_j(1-p_j)}{\lambda} \implies q_t^*(j) \propto \sqrt{p_j(1-p_j)},
\end{align*}
exactly as desired, and emphasizing that the oracle-optimal $q_t^*(j)$ are not time-varying. Because of the strict convexity of our minimization problem, such a solution is the unique minimizer.

\subsection{Derivation of (Approximate) Maximal Variance-Reducing Index}
\label{appendix:proof-theta-glm-variance-reduction}

Under the idealized assumptions of Section \ref{subsection:active-learning-sampling-policy}, suppose at time $t$ that
\begin{align*}
    u \mid \F_{t-1} \sim \mathcal{N}\left( \uhat^{(t-1)}, \Sigmahat^{(t-1)} \right),
\end{align*}
and recall that
\begin{align*}
    \theta^* := g(u) = \frac{1}{N_q} \sum_{j=1}^{N_q} \sigma\left( u^\top v_{j} \right).
\end{align*}
Under our Laplace approximation update derivation setup in Appendix \ref{appendix:log-reg-derivation}, we know that with $\phat_j^{(t-1)} := \sigma\left( \uhat^{(t-1)}{}^\top v_j \right)$,
\begin{multline}
    \label{eqn:appendix-post-update-mean-cov}
    u \mid \F_{t-1}, z_j\\ \overset{\cdot}{\sim} \mathcal{N}\bigg( \uhat^{(t-1)} + \left( \Sigmahat^{(t-1)} - \frac{\phat_j^{(t-1)} (1 - \phat_j^{(t-1)}) \Sigmahat^{(t-1)} v_j v_j^\top \Sigmahat^{(t-1)}}{1 + \phat_j^{(t-1)} (1 - \phat_j^{(t-1)}) v_j^\top \Sigmahat^{(t-1)} v_j} \right) \left( z_j - \phat_j^{(t-1)} \right) v_j,\\  \Sigmahat^{(t-1)} - \frac{\phat_j^{(t-1)} (1 - \phat_j^{(t-1)}) \Sigmahat^{(t-1)} v_j v_j^\top \Sigmahat^{(t-1)}}{1 + \phat_j^{(t-1)} (1 - \phat_j^{(t-1)}) v_j^\top \Sigmahat^{(t-1)} v_j} \bigg).
\end{multline}
Define $\hat{w}^{(t-1)}_j := \phat_j^{(t-1)} \left( 1 - \phat_j^{(t-1)} \right) = \sigma\left( \uhat^{(t-1)}{}^\top v_j \right) \left( 1 - \sigma\left( \uhat^{(t-1)}{}^\top v_j \right) \right)$. Then,
\begin{align*}
    \Var\left( u \mid \F_{t-1}, z_j \right) = \Sigmahat^{(t-1)} - \frac{\hat{w}^{(t-1)}_j \Sigmahat^{(t-1)} v_j v_j^\top \Sigmahat^{(t-1)}}{1 + \hat{w}^{(t-1)}_j v_j^\top \Sigmahat^{(t-1)} v_j}.
\end{align*}
Now, differentiating and recalling that $\sigma'(z) = \sigma(z) \left( 1 - \sigma(z) \right)$, we have
\begin{align}
    \label{eqn:appendix-gradient-glm}
    \nabla_u g(u) = \frac{1}{N_q} \sum_{j=1}^{N_q} \sigma\left( u^\top v_j \right) \left( 1 - \sigma\left( u^\top v_j \right) \right) v_j.
\end{align}
By the normal approximation associated with the multivariate Delta method, it follows that
\begin{align*}
    \Var\left( \theta^* \mid \F_{t-1}, z_j \right) \approx \left( \nabla_u g(u)\mid_{u = \uhat^{(t-1)}} \right)^\top \Var\left( u \mid \F_{t-1}, z_j \right) \left( \nabla_u g(u)\mid_{u = \uhat^{(t-1)}} \right).
\end{align*}
Plugging in $\uhat^{(t-1)}$ instead of using the mean parameter update in Equation \ref{eqn:appendix-post-update-mean-cov} because we do not have access to $z_j$ yet (as we have not queried our next point yet), by a similar argument as above we have
\begin{align*}
    \Var\left( \theta^* \mid \F_{t-1} \right) \approx \left( \nabla_u g(u)\mid_{u = \uhat^{(t-1)}} \right)^\top \Var\left( u \mid \F_{t-1} \right) \left( \nabla_u g(u)\mid_{u = \uhat^{(t-1)}} \right).
\end{align*}
Subtracting, we obtain the following approximate variance reduction after substituting $\uhat^{(t-1)}$ into Equation \ref{eqn:appendix-gradient-glm}:
\begin{align*}
    &\Var\left( \theta^* \mid \F_{t-1} \right) - \Var\left( \theta^* \mid \F_{t-1}, z_j \right)\\ &\approx \frac{1}{N_q^2} \left( \sum_{j'=1}^{N_q} \hat{w}^{(t-1)}_{j'} v_{j'} \right)^\top \left( \frac{\hat{w}^{(t-1)}_j \Sigmahat^{(t-1)} v_j v_j^\top \Sigmahat^{(t-1)}}{1 + \hat{w}^{(t-1)}_j v_j^\top \Sigmahat^{(t-1)} v_j} \right)\left( \sum_{j'=1}^{N_q} \hat{w}^{(t-1)}_{j'} v_{j'} \right)\\
    &= \frac{1}{N_q^2} \frac{\hat{w}^{(t-1)}_j \left( v_j^\top \Sigmahat^{(t-1)} \sum_{j'=1}^{N_q} \hat{w}^{(t-1)}_{j'} v_{j'} \right)^2}{1 + \hat{w}^{(t-1)}_j v_j^\top \Sigmahat^{(t-1)} v_j}.
\end{align*}
Thus, the index $I_t$ that upon being queried most reduces the (approximate) posterior variance of $\theta^*$ is
\begin{align*}
    I_t = \argmax_j \frac{\hat{w}^{(t-1)}_j \left( v_j^\top \Sigmahat^{(t-1)} \frac{1}{N_q} \sum_{j'=1}^{N_q} \hat{w}^{(t-1)}_{j'} v_{j'} \right)^2}{1 + \hat{w}^{(t-1)}_j v_j^\top \Sigmahat^{(t-1)} v_j}.
\end{align*}

\newpage

\section{Investigating Sampling With/Without Replacement}
\label{appendix:without-replacement}

A natural question is whether we can sample questions \textit{without replacement}, since re-querying the same question appears wasteful. An instinctive ad-hoc implementation is to ``zero out" the sampling probabilities of previously-queried questions and renormalize before drawing the next index. However, making this change while keeping the same PAI estimator and variance estimate is not theoretically supported and can be empirically harmful at moderate-to-large budgets. Recall that Theorem \ref{thm:martingale-clt} was proven via a martingale CLT and that our proof treated $\Delta_t := \phi_t - \theta$, with
\begin{align*}
    \phi_t = \frac{1}{N_q} \sum_{j=1}^{N_q} \phat_j^{(t-1)} + \frac{1}{N_q} \frac{z_{I_t} - \phat_{I_t}^{(t-1)}}{q_t(I_t)},
\end{align*}
as martingale differences/increments with respect to the filtration $\F_t$. With with-replacement sampling, at each round we draw our next question $I_t \sim q_t(\cdot)$ (c.f. our hybrid sampling policy in Eq.~\ref{eq:final-hybrid-sampling-policy}) over the full question bank, and the inverse-probability correction in $\phi_t$ uses $q_t(I_t)$. This alignment is what enables our martingale structure and makes Assumptions \ref{ass:var-stabilize}--\ref{ass:lindeberg} plausible: as our model-factor estimate $(\uhat^{(t)}, \Sigmahat^{(t)})$ stabilizes as we observe more question labels, the predicted probabilities $\phat^{(t)}$ and, by extension, the sampling policy $q_t(\cdot)$, also stabilize. This is important because now it is reasonable that the average conditional variance $n_b^{-1} \sum_{t=1}^{n_b} \Var(\Delta_t \mid \F_{t-1})$ converges to a finite limit $v^2$---the crux of Assumption \ref{ass:var-stabilize}.
\begin{figure}[!h]
  \begin{center}
    \includegraphics[width=0.95\textwidth]{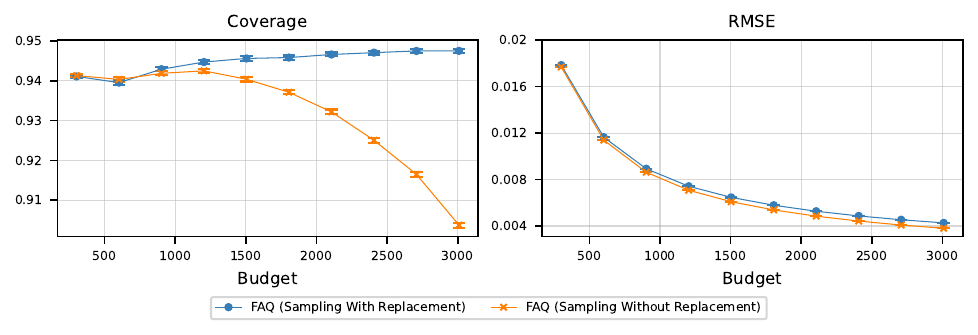}
    \caption{\textbf{Coverage and RMSE of FAQ with/without replacement on MMLU-Pro with fully-observed historical data.} \textbf{(Left)} coverage and \textbf{(Right)} RMSE of $\thetahat_{n_b}$ as a function of budget $n_b$, for FAQ sampling with/without replacement. Standard errors are shown, but negligible on the plot scale. Targeted nominal coverage for all experiments shown was $95\%$.}
    \label{appendix-version:with/without-replacement-coverage+rmse}
  \end{center}
\end{figure}

Under the ad-hoc without-replacement modification, we are no longer drawing the next question from $q_t$: rather, we are drawing from the restriction of $q_t$ to the remaining unseen questions $R_{t-1}$---let us call this restriction $\qtilde_t$. The key point is that $\qtilde_t(\cdot)$ is no longer aligned with PAI's estimator and variance estimate formulas---which both still use $q_t(I_t)$. As such, we no longer have martingale differences,\footnote{A consequence of this is also that $\thetahat_{n_b}$ is no longer unbiased for $\theta$.} thus hindering our martingale CLT argument and undermining our regularity conditions in Assumptions \ref{ass:var-stabilize}--\ref{ass:lindeberg}. For one, with our ad-hoc without-replacement modification, our $q_t(\cdot)$ are no longer lower-bounded, leading to unbounded inverse-probability corrections $1 / q_t(\cdot)$ and challenging Assumption \ref{ass:lindeberg}. Without access to the martingale CLT, our resulting normal approximation and variance estimate can become miscalibrated.

Empirically, Figure \ref{appendix-version:with/without-replacement-coverage+rmse} (Left) corroborates this: at moderate-to-large budgets, the ad-hoc without-replacement variant suffers from substantial miscoverage at moderate to large budgets. At low budgets, the difference in coverage is smaller because repeat sampling of questions is rarer, and thus the two sampling schemes behave similarly. We also observe from Figure \ref{appendix-version:with/without-replacement-coverage+rmse} (Right) that the ad-hoc without-replacement modification yields little improvement in RMSE (point estimate performance).

For these reasons, we recommend sampling with replacement when using FAQ/PAI to preserve theoretically-guaranteed uncertainty quantification, with minimal loss in point-estimation performance.

\newpage

\section{Additional Experimental Details}
\label{appendix:experimental-details}

\textbf{Datasets}\;\;\; We compile our historical data from the HuggingFace Open LLM Leaderboard \citep{open-llm-leaderboard-v2, myrzakhan2024open}, which is licensed under \href{https://huggingface.co/spaces/open-llm-leaderboard/open_llm_leaderboard/blob/main/README.md}{Apache-2.0}. We create one benchmark from MMLU-Pro \citep{wang2024mmlu} itself with $N_q = 12032$ questions, and aggregate the remaining datasets---BBH \citep{suzgun2023challenging}, GPQA \citep{rein2024gpqa}, IFEval \citep{zhou2023instruction}, MATH \citep{li2023camel}, and MuSR \citep{spraguemusr}---into a second benchmark with $N_q = 9574$ questions.

\textbf{Historical splits and missingness}\;\;\; For non-cold-start experiments comparing FAQ to the strongest baselines and uniform sampling, we consider the fully-observed historical data setting and the following $(n_{\text{full-obs}}, p_{\text{obs}})$ pairs: $(50, 0.1)$, $(200, 0.1)$, $(800, 0.1)$, and a particularly extremely-sparse $(0, 0.001)$ setting. 

For our traditional active inference ablation experiments, we consider the fully-observed data setting and the following $(n_{\text{full-obs}}, p_{\text{obs}})$ pairs: $(0, 0.01)$, $(0, 0.001)$, $(0, 0.0001)$, and $(0, 0.00001)$. We chose this range of missingness settings after observing the results of FAQ against the strongest baselines and uniform sampling, and desired missingness-settings that would best distinguish between FAQ and the traditional active inference ablation.

\textbf{Factor model selection}\;\;\; For non-cold-start experiments, we use the AdamW optimizer with weight decay $\lambda$, learning rate $10^{-2}$, and maximum iterations $2000$, implemented in PyTorch \citep{paszke2019pytorch}, with all experiments run on either a single NVIDIA A100 (40 GB) or NVIDIA H100 (80 GB). We sweep over latent factor dimension $k \in \{ 8, 16, 32, 64 \}$ and weight decay $\lambda \in \{ 10^1, 10^0, 10^{-1}, 10^{-2}, 10^{-3}, 10^{-4}, 10^{-5} \}$.

\textbf{FAQ hyperparameters}\;\;\; For all experiments, we sweep over the maximum tempering exponent $\beta_0 \in \{ 0.25, 0.5, 0.75, 1.0 \}$, the tempering governor $\gamma \in \{ 0.0, 0.05, 0.25, 0.5,0.75 \}$, the exploration-to-exploitation governor $\rho \in \{ 0.0, 0.05, 0.25, 0.5, 0.75 \}$, and the uniform-mixing strength $\tau \in \{ 0.05, 0.25, 0.5, 0.75 \}$.

\textbf{Baselines and active inference ablation}\;\;\; We sweep over uniform mixing strength $\tau \in \{ 0.05, 0.25, 0.5, 0.75 \}$.

\textbf{Cold-start and cross-suite transfer}\;\;\; Because no pre-existing target-suite (i.e., GPQA (All)) historical matrix is available for factor-model selection, we fit the initial factor model only on fully-observed MMLU-Pro historical data, using a fixed a priori configuration with factor dimension $k=8$, weight decay $\lambda = 1.0$, and $2000$ AdamW iterations. We then freeze the MMLU-Pro-fitted historical model factors $u_i$ and use their empirical mean and covariance to initialize the prior for new models' factors on GPQA (All). To initialize GPQA (All) question factors, we compute embeddings of all MMLU-Pro and GPQA (All) question texts using OpenAI's \texttt{text-embedding-3-large} model. For each GPQA (All) question, we find its $8$ nearest MMLU-Pro questions using cosine similarity in embedding space, and set its question factor to the corresponding similarity-weighted average of their MMLU-Pro question factors. FAQ sampling hyperparameters are tuned analogously to the procedures described above and in the main text, using a held-out set of $443$ validation models that we assume are available to query, and minimizing CI width only. These validation models are excluded from final evaluation: coverage and efficiency are evaluated only on the $2.2$K test models.

All source code and analysis routines for reproducing all results in this paper can be found at \url{https://github.com/skbwu/efficiently-evaluating-llms}.

\newpage

\section{Traditional Active Inference Ablation}
\label{appendix:traditional-active-inference-ablation}

\begin{figure}[!h]
  \begin{center}
    \includegraphics[width=0.95\textwidth]{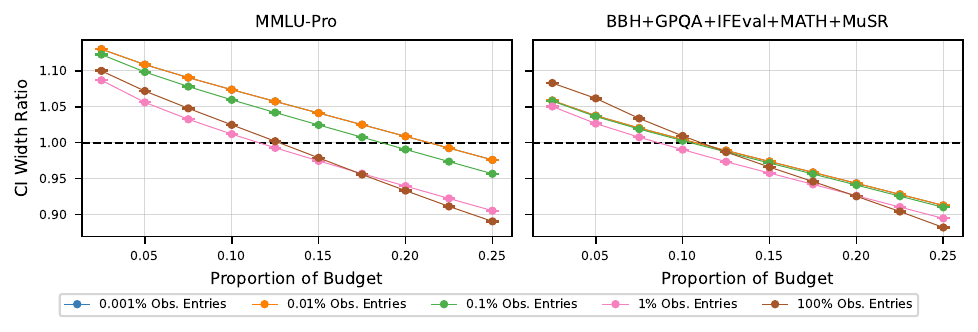}
    \caption{\textbf{CI-width ratio (post-hoc best traditional active inference ablation / FAQ) across budgets/missingness} Missingness uses $n_{\text{full-obs}}=0$ and MCAR $p_{\text{obs}}$ (legend values) on the remaining entries. Standard errors are shown, but negligible at the plot scale. \textbf{Width ratios $> 1$ favor FAQ (narrower CIs).}}
    \label{fig4_active-inference-ablation_widths}
  \end{center}
\end{figure}

To isolate the gains of PAI and our hybrid sampling (beyond the factor model), we compare FAQ to a ``traditional" active inference ablation that follows \citet{zrnic2024active}'s sequential (``nature's order")\footnote{Here, ``nature's order" refers to the benchmarks' native question orderings in HuggingFace's Open LLM Leaderboard.} procedure while using the \textit{same} factor-model predictions. We report CI-width ratios (rather than ESS) since they compare tightness directly across budgets without converting to a uniform-equivalent ``budget savings” scale. Figure \ref{fig4_active-inference-ablation_widths} reveals a clear phase shift. At low-budgets (e.g, $< 10\%$ of the bank), FAQ yields substantially narrower CIs; at larger budgets, the traditional ablation eventually overtakes FAQ by similar margins. This is not discouraging: large-scale evaluation ($1$K+ models, $10$K+ questions) typically emphasizes the low-budget regime where FAQ is strongest. It also suggests a natural extension---a three-stage hybrid policy: active-learning early, oracle-style variance reduction mid-budget, and traditional active inference at the end---combining the best of both worlds.

Three mechanisms likely drive the shift. First, FAQ can proactively target informative questions early on via its active-learning component, whereas traditional active inference can only label/skip items in a fixed stream and its $\pi_t(t) \propto \min(\phat_t^{(t-1)}, 1-\phat_t^{(t-1)})$ rule contains no explicit active-learning objective---explaining the low-budget gap. Second, at large budgets, FAQ's sampling with replacement can resample questions, reducing efficiency relative to stream-based procedures that touch each question at most once. Third, our tuned $\tau$-mixing suggests variance control dominates at high budgets: the best post-hoc traditional ablation consistently uses $\tau = 0.75$ (near-uniform query/label probabilities). By contrast, FAQ's train/val tuning often selects much smaller $\tau$ (e.g., $0.05$ for many budgets $\geq 10\%$), yielding spikey/highly-concentrated $q_t$. After the factor  model has largely converged, this extra ``targeting" offers little learning benefit, but increases AIPW variance through larger weights $1 / q_t(I_t)$, helping explain why the traditional ablation can overtake FAQ at large budgets. Because FAQ targets cost savings, we focus on the low-budget regime.

\newpage

\section{Additional Results}
\label{appendix:additional-results}

\begin{figure}[!h]
  \begin{center}
    \includegraphics[width=0.95\textwidth]{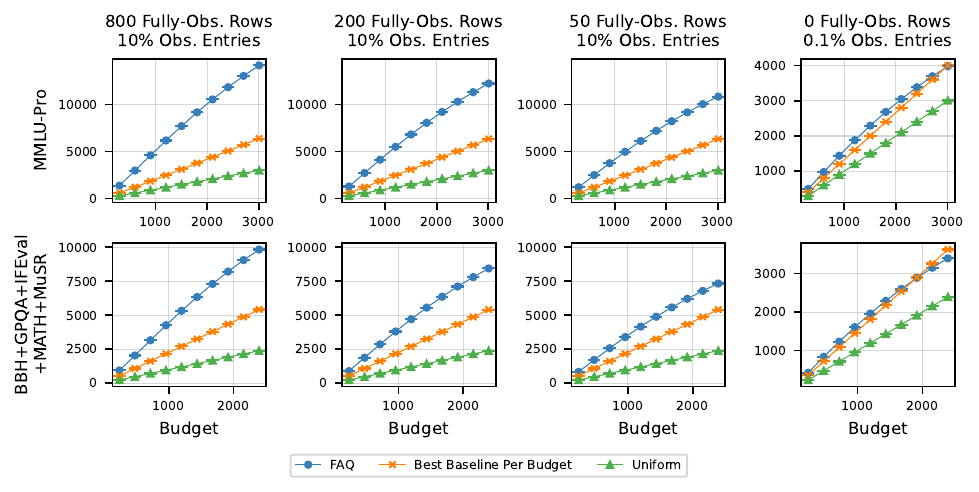}
    \caption{\textbf{ESS under missing historical data on MMLU-Pro and BBH+GPQA+IFEval +MATH+MuSR.} ESS vs. budget for FAQ, the strongest baseline per budget (selected post-hoc), and uniform sampling, per missingness setting. Missingness is parameterized by $n_{\text{full-obs}}$ fully-observed historical rows (out of $2.2$K) and MCAR entrywise observation probability $p_{\text{obs}}$ on the remaining rows (e.g., $n_{\text{full-obs}} = 50$ and $p_{\text{obs}} = 0.1$ $\sim$ $12\%$ observed). Standard errors are shown, but negligible on the plot scale. This figure shows the full results corresponding to Figure \ref{fig2_ess_sparsity-settings} in the main text.}   
    \normalsize
    \label{appendix-version:fig2_ess_sparsity-settings}
  \end{center}
\end{figure}

\begin{figure}[!b]
  \begin{center}
    \includegraphics[width=0.95\textwidth]{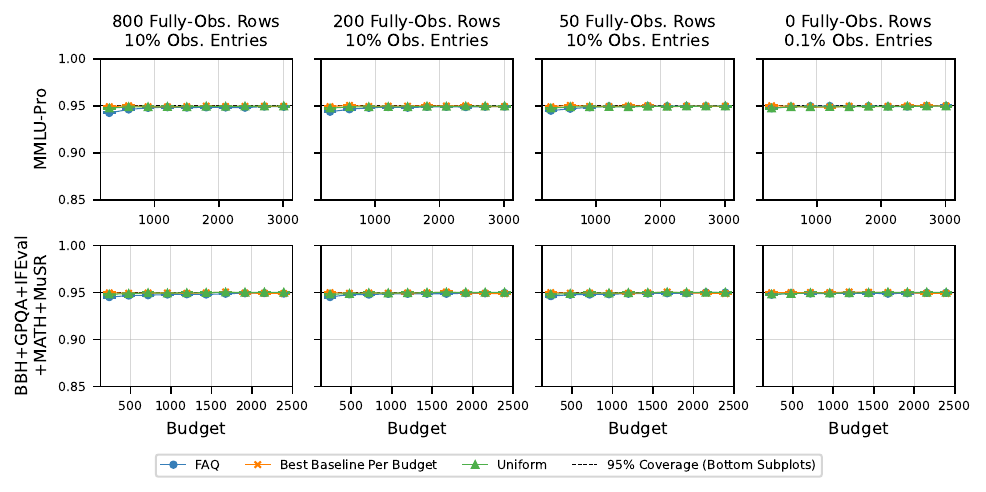}
    \caption{\textbf{Coverage under missing historical data on MMLU-Pro and BBH+GPQA+IFEval +MATH+MuSR.} Coverage vs. budget for FAQ, the strongest baseline per budget (selected post-hoc), and uniform sampling, per missingness setting. Missingness is parameterized by $n_{\text{full-obs}}$ fully-observed historical rows (out of $2.2$K) and MCAR entrywise observation probability $p_{\text{obs}}$ on the remaining rows (e.g., $n_{\text{full-obs}} = 50$ and $p_{\text{obs}} = 0.1$ $\sim$ $12\%$ observed). Standard errors are shown, but negligible on the plot scale. This figure confirms that the results in Figure \ref{fig2_ess_sparsity-settings} in the main text were achieved with valid coverage.}   
    \normalsize
    \label{appendix-version:full-coverages.}
  \end{center}
\end{figure}

\begin{figure}[!h]
  \begin{center}
    \includegraphics[width=0.95\textwidth]{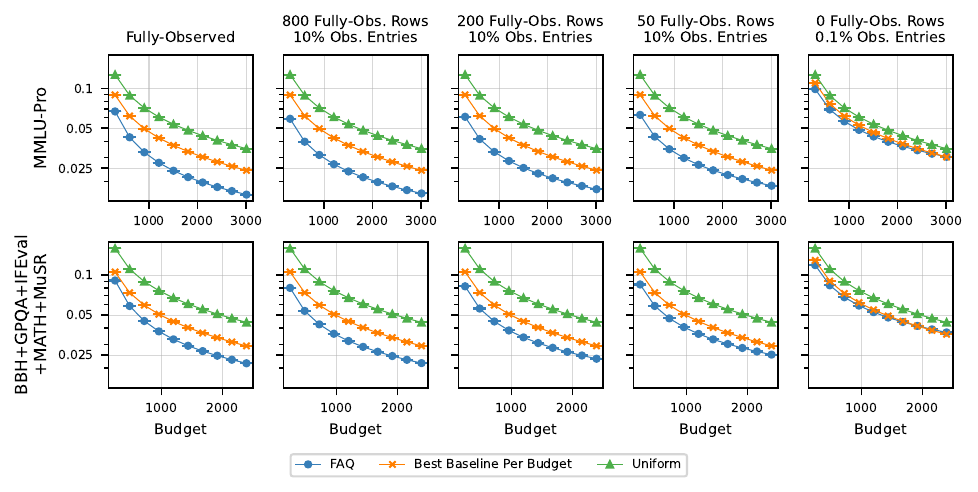}
    \caption{\textbf{Confidence interval (CI) widths under missing historical data on MMLU-Pro and BBH+GPQA+IFEval+MATH+MuSR.} CI width vs. budget for FAQ, the strongest baseline per budget (selected post-hoc), and uniform sampling, per missingness setting. Missingness is parameterized by $n_{\text{full-obs}}$ fully-observed historical rows (out of $2.2$K) and MCAR entrywise observation probability $p_{\text{obs}}$ on the remaining rows (e.g., $n_{\text{full-obs}} = 50$ and $p_{\text{obs}} = 0.1$ $\sim$ $12\%$ observed). Standard errors are shown, but negligible on the plot scale. \textbf{Smaller CI widths indicate stronger performance.}}    
    \label{appendix-version:full-widths}
    \normalsize
  \end{center}
\end{figure}

\begin{figure}[!h]
  \begin{center}
    \includegraphics[width=0.95\textwidth]{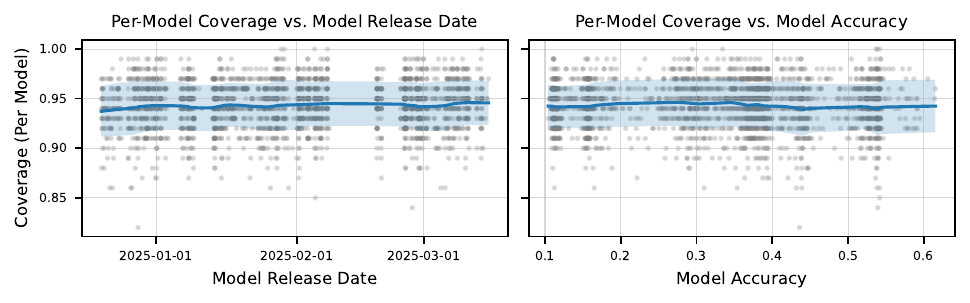}
    \caption{\textbf{Per-model coverages on MMLU-Pro with fully-observed historical data at budget $7.5\%$.} Per-model coverages (averaged over $100$ seeds) vs. \textbf{(Left)} model release date and \textbf{(Right)} true model accuracy. Gray dots: individual models; solid-blue curve: locally-weighted mean with a $K=501$ nearest-neighbor Gaussian kernel; light-blue band: local $\pm 1$ SD band. This figure shows the MMLU-Pro counterpart to Figure \ref{fig3_per-model-coverages-7.5-bbh} in the main text.}
    \label{appendix-version:fig3_per-model-coverages-7.5-mmlu}
  \end{center}
\end{figure}

\begin{figure}[!h]
  \begin{center}
  \includegraphics[width=0.95\textwidth]{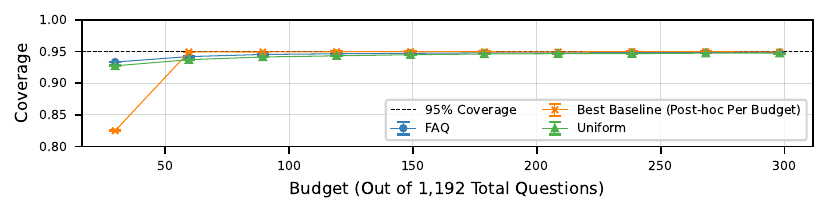}
    \caption{\textbf{Coverage for cold-start FAQ on GPQA (All).} Coverage vs. budget for cold-start FAQ \textit{with zero target-suite historical data  for factor-model fitting}, the strongest post-hoc baseline per budget \textit{with access to $0.1\%$ target-suite historical data}, and uniform sampling. Standard errors are shown, but negligible on the plot scale. This figure confirms that main text Figure \ref{figx:cold-start}'s results were achieved with valid coverage.}
    \label{figx:cold-start_appendix}
  \end{center}
\end{figure}

\end{document}